\documentclass[11pt]{article}
\usepackage[margin=1in]{geometry}
\date{}

\usepackage{tikz}
\usetikzlibrary{positioning,arrows.meta,shapes.misc,shapes.symbols,shapes.geometric,backgrounds,shadows.blur}

\definecolor{TraceBlue}{RGB}{220, 230, 255}
\definecolor{TraceBlueDark}{RGB}{50, 80, 200}
\definecolor{TraceGreen}{RGB}{220, 250, 230}
\definecolor{TraceGreenDark}{RGB}{40, 150, 80}
\definecolor{TraceRed}{RGB}{255, 235, 235}
\definecolor{TraceRedDark}{RGB}{200, 50, 50}
\definecolor{TraceYellow}{RGB}{255, 250, 230}
\definecolor{TraceOrange}{RGB}{220, 120, 0}
\definecolor{TracePurple}{RGB}{120, 60, 180}
\definecolor{TraceTeal}{RGB}{0,128,128}

\usepackage{hyperref}
\usepackage{url}
\usepackage{amsmath}
\usepackage{amsthm}
\usepackage{amssymb}
\usepackage{booktabs}
\usepackage{enumitem}
\usepackage{xcolor} % for rebuttal markup (red text)
\usepackage{authblk}
\usepackage{lmodern}

\usepackage[ruled,vlined,linesnumbered]{algorithm2e}
\DontPrintSemicolon
\SetKwComment{tcp}{/* }{ */}
\SetKwInOut{Input}{Input}
\SetKwInOut{Output}{Output}

\newtheorem{proposition}{Proposition}
\newtheorem{assumption}{Assumption}
\newtheorem{definition}{Definition}
\newtheorem{corollary}{Corollary}
\newtheorem{lemma}{Lemma}

\newcommand{\riskdiff}{\lvert\Delta R\rvert}
\newcommand{\Woneh}{W^{(h)}_1}

\newcommand{\Lx}[1]{L_x\!\left(#1\right)}
\newcommand{\tauval}{\tau}
\newcommand{\OutDiscLtwo}{\mathcal{M}_{\ell_2}}    % L2 logit gap on target
\newcommand{\chat}{\widehat c_h}

\DeclareMathOperator{\Wone}{W_1}

\begin{document}

\title{TRACE: Theoretical Risk Attribution under Covariate-shift Effects}

%============================================
%Authors
%============================================
\author[$\ast$]{Hosein Anjidani}
\author[ ]{S. Yahya S. R. Tehrani\thanks{Equal contribution}}
\author[ ]{Mohammad Mahdi Mojahedian}
\author[ ]{Mohammad Hossein Yassaee}
\affil[ ]{\footnotesize Information Systems and Security Lab. (ISSL), Sharif University of Technology, Tehran, Iran}
\affil[ ]{hoseinanjidani2@gmail.com, yahyaa.tehrani@gmail.com, mojahedian@sharif.edu, yassaee@sharif.edu}

\maketitle

\begin{abstract}
When a source-trained model $Q$ is replaced by a model $\tilde{Q}$ trained on shifted data, its performance on the source domain can change unpredictably. To address this, we study the two-model risk change, $\Delta R := R_P(\tilde{Q}) - R_P(Q)$, under covariate shift. We introduce TRACE (Theoretical Risk Attribution under Covariate-shift Effects), a framework that decomposes $|\Delta R|$ into an interpretable upper bound. This decomposition disentangles the risk change into four actionable factors: two generalization gaps, a model change penalty, and a covariate shift penalty, transforming the bound into a powerful diagnostic tool for understanding why performance has changed. To make TRACE a fully computable diagnostic, we instantiate each term. The covariate shift penalty is estimated via a model sensitivity factor (from high-quantile input gradients) and a data-shift measure; we use feature-space Optimal Transport (OT) by default and provide a robust alternative using Maximum Mean Discrepancy (MMD). The model change penalty is controlled by the average output distance between the two models on the target sample. Generalization gaps are estimated on held-out data. We validate our framework in an idealized linear regression setting, showing the TRACE bound correctly captures the scaling of the true risk difference with the magnitude of the shift. Across synthetic and vision benchmarks, TRACE diagnostics are valid and maintain a strong monotonic relationship with the true performance degradation. Crucially, we derive a deployment gate score that correlates strongly with $|\Delta R|$ and achieves high AUROC/AUPRC for gating decisions, enabling safe, label-efficient model replacement.
\end{abstract}

%\begin{IEEEkeywords}
%Correlation metric.
%\end{IEEEkeywords}

% For peer review papers, you can put extra information on the cover
% page as needed:
% \ifCLASSOPTIONpeerreview
% \begin{center} \bfseries EDICS Category: 3-BBND \end{center}
% \fi
%
% For peerreview papers, this IEEEtran command inserts a page break and
% creates the second title. It will be ignored for other modes.
%\IEEEpeerreviewmaketitle
%\tableofcontents

%*****************************************************************
\section{Introduction}

{\color{black}
Replacing a production model $Q$ with an updated model $\tilde Q$ trained on shifted data is a routine yet high-stakes operation. Even under pure covariate shift, where the labeling function is invariant ($P(y\mid x)=\tilde P(y\mid \tilde{x})$), the new model's performance on the original source domain can change unpredictably. This challenge is not merely academic; such shifts are ubiquitous in modern deployment, arising from sensor upgrades, regional expansion, or temporal drift. Consider a concrete example: a medical imaging model originally trained at Hospital A is updated using data from Hospital B's new scanners. For safety, the new model must not degrade performance on Hospital A's legacy patient population. In this and many other cases, the source domain acts as a mission-critical ``anchor.'' A new model that improves on the shifted data but silently harms this anchor distribution represents a significant safety failure. While measuring the net performance change is straightforward, explaining \textit{why} it changed is not. Practitioners need a diagnostic tool that can attribute the change to its root causes: is it due to the geometric data shift, the model instability during retraining, or simple generalization error?
}

To this end, we introduce TRACE (Theoretical Risk Attribution under Covariate-shift Effects), a framework for analyzing the two-model risk change, $\Delta R := R_P(\tilde{Q}) - R_P(Q)$, on the fixed source distribution $P$. At the heart of TRACE is a conceptual decomposition that provides an attribution lens for this risk change. Under a mild Lipschitz condition, we can bound the absolute change $|\Delta R|$:
\begin{equation}
\label{eq:intro-trace}
|\Delta R| \le \underbrace{G_Q}_{\substack{\text{Source}\\ \text{Gen. Gap}}} + \underbrace{G_{\tilde Q}}_{\substack{\text{Target}\\ \text{Gen. Gap}}} + \underbrace{D_{Q,\tilde Q}}_{\substack{\text{Empirical}\\ \text{Discrepancy}}} + \underbrace{\mathsf{COSP}}_{\substack{\text{Covariate}\\ \text{Shift Penalty}}}
\end{equation}
This inequality disentangles the risk change into four interpretable factors. The algebraic steps are simple (triangle inequality and Kantorovich-Rubinstein duality); the novelty lies in using this decomposition as a two-model attribution framework on a source domain. This perspective separates the geometric effects of data shift from the algorithmic effects of retraining, a necessary lens for safe model replacement that complements classic single-model transfer bounds.

Our main contribution is to operationalize this conceptual lens into a fully computable, high-probability diagnostic report. We estimate the geometric data shift using feature-space Optimal Transport (OT) or Maximum Mean Discrepancy (MMD). We further split the empirical discrepancy, controlling the model change component via an average output distance on the target sample. Model sensitivity is captured by robust Lipschitz proxies derived from high-quantile input-gradient norms. Combining these estimators with standard concentration inequalities yields a non-asymptotic, end-to-end diagnostic with a full attribution breakdown.

\textcolor{black}{Crucially, TRACE is designed as a practical \emph{tool} for practitioners. Each term in the decomposition is paired with an actionable interpretation: large generalization gaps suggest overfitting and motivate regularization; a large model-change term points to unstable fine-tuning or aggressive hyperparameters; and a large covariate-shift penalty signals that the input distribution has moved far from the anchor.}

The practical value of TRACE is demonstrated through a simplified deployment gate score, derived from its core components: model change and covariate shift. Across synthetic and vision benchmarks, this score strongly correlates with the true performance degradation. We also show that, in vision benchmarks, it reliably ranks potentially harmful updates and achieves exceptionally high AUROC/AUPRC for gating decisions. TRACE thus provides an actionable framework to understand, predict, and control the risks of model replacement, enabling safer and more label-efficient machine learning deployment.

Figure~\ref{fig:trace-overview} summarizes our setting and the TRACE decomposition. 
The top row shows the operational workflow: a model $Q$ trained on anchor data $S \sim P$, a new model $\tilde Q$ trained on a shifted distribution $\tilde P$, and the resulting risk change when $\tilde Q$ is deployed back on the anchor domain. 
The bottom row shows how TRACE attributes this change to covariate shift, model change, and source/target generalization gaps; these four terms correspond directly to the components in our main bound in \eqref{eq:intro-trace}.

\begin{figure}[t]
    \centering
    \resizebox{\columnwidth}{!}{%
    \begin{tikzpicture}[
    font=\sffamily,
    >=LaTeX,
    node distance=0.8cm and 1.0cm,
    panelAnchor/.style={draw=TraceBlueDark, fill=TraceBlue, rounded corners=12pt, minimum width=4.5cm, minimum height=6cm, line width=0.9pt},
    panelTarget/.style={draw=TraceGreenDark, fill=TraceGreen, rounded corners=12pt, minimum width=4.5cm, minimum height=6.0cm, line width=0.9pt},
    panelRisk/.style={draw=TraceRedDark, fill=TraceRed, rounded corners=12pt, minimum width=5cm, minimum height=6.0cm, line width=0.9pt},
    panelTrace/.style={draw=orange!50!black, fill=TraceYellow, rounded corners=12pt, minimum width=18cm, minimum height=4.6cm, line width=0.9pt},
    distcloud_Anchor/.style={cloud, cloud puffs=9, draw=#1, fill=white, line width=0.9pt,
                      minimum width=2.0cm, minimum height=1.2cm, align=center},
    distcloud_Target/.style={cloud, cloud puffs=18, draw=#1, fill=white, line width=0.9pt,
                      minimum width=2.3cm, minimum height=1.1cm, align=center},
    db/.style={cylinder, shape border rotate=90, draw=gray!60, fill=white,
               aspect=0.25, minimum width=1.1cm, minimum height=1.1cm,
               font=\bfseries\small},
    netNode/.style={draw=#1, fill=white, rounded corners=3pt, line width=0.9pt,
                    minimum width=1.8cm, minimum height=1.0cm,
                    font=\bfseries, text=#1},
    attrcard/.style={draw=gray!40, fill=white, rounded corners=6pt,
                     inner sep=8pt, text width=3.4cm, align=center, minimum height=2.2cm},
    flow/.style={->, thick, gray!70},
    link/.style={dashed, gray!60, thick, ->}
]

% =========================================================
% 1. ANCHOR DOMAIN PANEL
% =========================================================
\node[panelAnchor] (anchorPanel) at (0,0) {};
\node[anchor=north, text=TraceBlueDark, font=\bfseries\small]
      at (anchorPanel.north) {1. Anchor Domain};

\node[distcloud_Anchor=TraceBlueDark] (Px) at ([yshift=-1.25cm]anchorPanel.north) {$P_X$};

\node[db, below=0.6cm of Px] (S) {$S$};
\node[netNode=TraceBlueDark, below=0.8cm of S] (Q) {$Q$};

\draw[flow] (Px) -- node[right, font=\scriptsize,text=black!60] {trains on} (S);
\draw[flow] (S) -- (Q);

% =========================================================
% 2. TARGET DOMAIN PANEL
% =========================================================
\node[panelTarget, right=2.6cm of anchorPanel] (targetPanel) {};
\node[anchor=north, text=TraceGreenDark, font=\bfseries\small]
      at (targetPanel.north) {2. Target Domain};

\node[distcloud_Target=TraceGreenDark] (PtX) at ([yshift=-1.25cm]targetPanel.north) {$\tilde{P}_X$};
\node[db, below=0.6cm of PtX] (St) {$\tilde{S}$};
\node[netNode=TraceGreenDark, below=0.7cm of St] (Qt) {$\tilde{Q}$};

\draw[flow] (PtX) -- node[right, font=\scriptsize,text=black!60] {trains on} (St);
\draw[flow] (St) -- (Qt);

% Covariate shift arrow between Px and PtX
\draw[->, very thick, TraceRedDark]
  ([xshift=0.2cm]Px.east) --
  node[above, font=\bfseries\scriptsize, text=TraceRedDark] {Covariate Shift}
  node[below, font=\scriptsize, text=black!60,align = center] {$W_1(P_X,\tilde{P}_X)$\\or\\MMD}
  ([xshift=-0.2cm]PtX.west);
\coordinate (ShiftMid) at ($(Px)!0.5!(PtX)$);

% =========================================================
% 3. DEPLOYMENT/RISK PANEL
% =========================================================
\node[panelRisk, right=2.6cm of targetPanel] (riskPanel) {};
\node[anchor=north, text=TraceRedDark, font=\bfseries\small,align=center]
      at (riskPanel.north) {3. Deployment \& \\ Risk Change};

\node[anchor=north, font=\small]
      (testlabel) at ([yshift=-1cm]riskPanel.north) {Testing on Anchor Data $P_X$};

% simple bar-style comparison
\node[below=0.15cm of testlabel, text=TraceBlueDark, font=\scriptsize] (barQ) {$R_P(Q)$};
\node[below=0.05cm of barQ, draw=TraceBlueDark, fill=TraceBlue!60,
      minimum width=2.2cm, minimum height=0.18cm] {};

\node[below=0.6cm of barQ, text=TraceTeal, font=\scriptsize,xshift=-0.45cm] (barQt) {$R_P(\tilde{Q})$};
\node[below=0.05cm of barQt, draw=TraceGreenDark, fill=TraceGreen!60,
      minimum width=1.3cm, minimum height=0.18cm] (rect_green) {};

\node[right=0cm of rect_green, draw=red, fill=red!60,
      minimum width=0.9cm, minimum height=0.18cm] (rect_red) {};

\node[draw=TraceRedDark, fill=white, font=\scriptsize,
      rounded corners=3pt, inner sep=4pt,
      above=0.1cm of rect_red] (DeltaR) {$|\Delta R|$};

\node[below=1.65cm of barQ, text=TraceRedDark, font=\bfseries\scriptsize] {Performance Drop?};
\node (Why) [below=2cm of barQ, font=\scriptsize, text=black!60,align=center]
      {Why did the new model\\perform worse on the anchor?};

\draw[->, thick, TraceTeal, dashed]
  (Qt.east) to[bend left=20]
    node[pos=0.5, below, font=\scriptsize,yshift=-0.5cm,xshift=0.1cm] {Deploy to Anchor}
  (barQt.west);

% =========================================================
% 4. TRACE ATTRIBUTION PANEL (BOTTOM)
% =========================================================
\node[panelTrace, below=2cm of targetPanel,xshift=0.35cm] (tracePanel) {};
\node[anchor=north, font=\bfseries\large]
      at (tracePanel.north) {4. TRACE Attribution: Explaining the ``Why?''};
\node[anchor=north, font=\itshape\small, text=black!60]
      at ([yshift=-0.5cm]tracePanel.north) {$|\Delta R| \leq \text{Gen.\ Gaps} + \text{Model Change} + \text{Covariate Shift}$};

\draw[thick, gray!90,dashed]
  (Why.south) -- ++(0,-2.25)-- ++(-7.5,0)
  node[midway, below,yshift=-0.05cm] {Decomposed into TRACE components}
  coordinate (whyEnd);

\draw[->, thick, gray!70,dashed]
  (whyEnd) -- ++(0,-1);

  % node[midway,right=0.1cm,font=\scriptsize]
  %     {Decomposed into TRACE components}
  % (tracePanel.north);

% --- four cards inside bottom panel
\path (tracePanel.west) ++(2.6,-0.3) node[attrcard, draw=TraceBlueDark] (CardShift) {
    \textbf{\textcolor{TraceBlueDark}{Covariate Shift Penalty}}\\[2pt]
    $L_x \cdot W_1(P_X,\tilde{P}_X)$\\[4pt]
    \scriptsize How different are the data distributions?
};

\path (CardShift.east) ++(2.25,0) node[attrcard, draw=TracePurple] (CardModel) {
    \textbf{\textcolor{TracePurple}{Model Change Penalty}}\\[2pt]
    $\mathbb{E}_x\|Q(x)-\tilde{Q}(x)\|$\\[4pt]
    \scriptsize How different are the model predictions?
};

\path (CardModel.east) ++(2.25,0) node[attrcard, draw=TraceOrange] (CardGQ) {
    \textbf{\textcolor{TraceOrange}{Source Gen. Gap $G_Q$}}\\[2pt]
    $|R_P(Q)-\widehat{R}_S(Q)|$\\[4pt]
    \scriptsize Training noise from finite anchor sample.
};

\path (CardGQ.east) ++(2.25,0) node[attrcard, draw=TraceTeal] (CardGt) {
    \textbf{\textcolor{TraceTeal}{Target Gen. Gap $G_{\tilde{Q}}$}}\\[2pt]
    $|R_{\tilde{P}}(\tilde{Q})-\widehat{R}_{\tilde{S}}(\tilde{Q})|$\\[4pt]
    \scriptsize Training noise from finite target sample.
};

% =========================================================
% 5. DASHED LINKS FROM TOP TO BOTTOM
% =========================================================
% \draw[link] (ShiftMid) -- (CardShift.north);           % Shift -> shift term
% \coordinate (midModels) at ($(Q)!0.5!(Qt)$);
% \draw[link] (midModels) |- (CardModel.north);          % Models -> model-change
% \draw[link] (S.south) -- ++(0,-1.2) -| (CardGQ.north); % Source data -> G_Q
% \draw[link] (St.south) -- ++(0,-1.2) -| (CardGt.north);% Target data -> G_t

\end{tikzpicture}
    }
    \caption{Conceptual overview of TRACE.}
    \label{fig:trace-overview}
\end{figure}

%%%%%%%%%%%%%%%%%%%%%%%%%%%%%%%%%%%%%%%%%%%%%%%%%%%%%
\section{Related Work}
\label{sec:related_work}

Our work introduces an attribution framework for the two-model risk change that occurs during model replacement under covariate shift. This goal is distinct from, but related to, several established lines of research, most notably the theory of single-model domain adaptation. We situate TRACE by highlighting these connections and differences.

\paragraph{Domain Adaptation and Divergence Measures.}
Domain adaptation (DA) theory primarily aims to bound the target-domain risk of a \emph{single model} trained on a source domain~\cite{BenDavid2010, Mansour2009}. Such bounds rely on divergence terms to quantify distribution shift, with recent work leveraging geometry-aware metrics like the Wasserstein distance from Optimal Transport (OT) \cite{Redko2017, Courty2017, Shen2018} and feature-space distances like Maximum Mean Discrepancy (MMD)\cite{Gretton2012, Long2015, Tzeng2014}. \textcolor{black}{Other discrepancy-based approaches have explored margin disparities \cite{Zhang2019MDD} and $Y$-discrepancy \cite{Cortes2010, Mohri2012}, establishing a rich landscape of divergence measures}. {\color{black}TRACE fundamentally differs from this classic setup in its goal. Instead of bounding the target risk $R_{\tilde{P}}(Q)$, TRACE analyzes the \emph{risk change on the source domain} between two models, $|R_P(Q) - R_P(\tilde Q)|$. This addresses the practical MLOps problem of \emph{safe model replacement}, where the priority is to ensure an update does not degrade performance on a critical anchor distribution. While we leverage standard tools like OT and MMD, we repurpose them for attribution within this novel two-model, source-centric decomposition.}

\paragraph{Model Disagreement and Algorithmic Stability.}
Differences between models serve as informative signals in several fields. For instance, disagreement-based DA methods leverage prediction differences for unsupervised adaptation~\cite{Saito2017, Saito2018}. Separately, algorithmic stability theory analyzes how training perturbations affect the learned model to provide generalization guarantees~\cite{Bousquet2002, Hardt2016, Kuzborskij2018}. TRACE adopts a related concept, specifically the average output distance between models. However, we employ this metric not to drive adaptation or bound generalization, but to quantify the \emph{model change} component within our risk attribution framework.

\paragraph{Generalization Under Distribution Mismatch.}
Other theoretical works bound the single-learner generalization error under distribution mismatch using tools from PAC-Bayesian theory~\cite{Germain2016} and information theory~\cite{RussoZou2016, XuRaginsky2017, Steinke2020}. While these works aim to bound the generalization gap itself, TRACE is complementary: it situates the generalization gaps alongside other quantities in a unified inequality that explains the total change in performance.

\paragraph{Shift Detection and Monitoring.}
The problem of dataset shift is well-documented, with taxonomies distinguishing covariate, label, and conditional shift~\cite{Quinonero2009, Lipton2018}. In practice, shift is often addressed by monitoring methods that detect its occurrence via two-sample tests~\cite{Rabanser2019} or out-of-distribution detectors~\cite{hendrycks2016baseline}. These methods typically produce an alarm \emph{rather than an explanation}. TRACE provides the next step: a quantitative attribution connecting a detected shift to its impact on model risk.

%%%%%%%%%%%%%%%%%%%%%%%%%%%%%%%%%%%%%%%%%%%%%%%%%%%%%%%%%%%%%%%
\section{Problem Setup}

\paragraph{Setting and Goal.}
We address the problem of replacing a model in the presence of data shift. We consider a source distribution $P$ and a target distribution $\tilde{P}$ over a common space $\mathcal{X} \times \mathcal{Y}$. We operate under the standard \textbf{covariate shift} assumption, where the input distributions differ but the conditional label distribution remains invariant:
\[
P(y\mid x)=\tilde P(y\mid \tilde{x}), \qquad P_X \neq \tilde P_X.
\]
A source model $Q$, with predictor $f_Q:\mathcal{X}\to\mathbb{R}^C$, is trained on an i.i.d. sample $S=\{(x_i,y_i)\}_{i=1}^n$ from $P$. A target model $\tilde{Q}$, with predictor $f_{\tilde{Q}}$, is trained on an i.i.d. sample $\tilde{S}=\{(\tilde{x}_i,\tilde{y}_i)\}_{i=1}^n$ from $\tilde{P}$. Our goal is to analyze the change in risk on the original \textbf{source domain} $P$ when replacing $Q$ with $\tilde{Q}$. We formally study the two-model risk change\footnote{We define $\Delta R = R_P(\tilde{Q}) - R_P(Q)$ so that a positive value represents a \textbf{risk increase} (degradation) on the source domain.},
\[
\Delta R := R_P(\tilde Q) - R_P(Q),
\]
and seek to attribute its magnitude $|\Delta R|$ to distinct factors such as generalization error, data shift, and model change.

\paragraph{Loss and Risks.}
Let $\ell:\mathbb{R}^C\times\mathcal{Y}\to\mathbb{R}^+$ be a loss function. We define the population risk of a model $Q$ on a distribution $P$ and its corresponding empirical risk on a sample $S$ as:
\[
R_P(Q)=\mathbb E_{(x,y)\sim P}[\ell(f_Q(x),y)],\quad \widehat R_S(Q)=\frac{1}{n}\sum_{i=1}^n\ell(f_Q(x_i),y_i).
\]
The risks $R_{\tilde{P}}(\tilde{Q})$ and $\widehat{R}_{\tilde{S}}(\tilde{Q})$ are defined analogously.

\paragraph{Quantities for Attribution.}
Our analysis decomposes $|\Delta R|$ using the following key quantities: the source generalization gap $G_Q$, the target generalization gap $G_{\tilde{Q}}$, and the empirical discrepancy $D_{Q,\tilde{Q}}$. These are defined as:
\[
G_Q := \lvert R_P(Q)-\widehat R_S(Q)\rvert, \quad G_{\tilde Q} := \lvert R_{\tilde P}(\tilde Q)-\widehat R_{\tilde S}(\tilde Q)\rvert, \quad D_{Q,\tilde Q} := \lvert \widehat R_S(Q)-\widehat R_{\tilde S}(\tilde Q)\rvert.
\]
As we show in Section~\ref{sec:method}, the empirical discrepancy $D_{Q,\tilde{Q}}$ is further split to isolate the effects of data shift from those of the model update.

\paragraph{Assumptions.}
We assume the input space $(\mathcal{X},d_{\mathcal{X}})$ is a Polish metric space and that the marginals $P_X, \tilde{P}_X$ have finite first moments, ensuring the Wasserstein distance is well defined. Our framework relies on the following standard assumptions:
\begin{assumption}[Bounded Loss]\label{assump:bounded-loss}
The loss function $\ell(z,y)$ is bounded in $[0,M]$. For multiclass cross-entropy with $C$ classes and logits $z=f(x)$, enforcing a bound on the logits such as $\|z\|_\infty \le B$ (e.g., via clipping) guarantees this, with $M = B+\log C$. {\color{black}This is a standard practice in production ML for numerical stability and has negligible impact on model accuracy, while being essential for the concentration bounds we use.}
\end{assumption}

\begin{assumption}[Input Lipschitz Continuity of the Loss]\label{assump:Lx}
For any predictor $f$, the end-to-end loss function is $L_x(f)$-Lipschitz with respect to the input $x$. That is, there exists a constant $L_x(f)>0$ such that for all $x,x' \in \mathcal{X}$ and any $y \in \mathcal{Y}$:
\[
\lvert \ell(f(x),y)-\ell(f(x'),y)\rvert \le L_x(f)\, d_{\mathcal X}(x,x')\quad\text{for all } x,x',y.
\]
{\color{black}In practice, this smoothness property is naturally encouraged by common regularization techniques such as weight decay.}
\end{assumption}

\begin{assumption}[Logit Lipschitz Continuity of the Loss]\label{assump:Lell}
The loss function is $L_\ell$-Lipschitz with respect to its logit argument $z$. That is, there exists a constant $L_\ell>0$ such that for all $z,z' \in \mathbb{R}^C$, any $y \in \mathcal{Y}$, and a norm $\|\cdot\|$ on $\mathbb{R}^C$:
\[
\lvert \ell(z,y)-\ell(z',y)\rvert \le L_\ell\, \lVert z-z'\rVert\quad\text{for all } z,z',y.
\]
{\color{black}This is a standard and mild assumption. For the widely-used softmax cross-entropy loss, this condition holds globally with a constant $L_\ell \le \sqrt{2}$ for the $\ell_2$ norm, requiring no special architectural changes for typical deep learning models.}

\end{assumption}

{\color{black}
\paragraph{General Applicability.} The TRACE framework is not restricted to classification. The entire derivation holds for any bounded, Lipschitz loss, including regression problems (typically using a bounded variant like the Huber loss). We demonstrate this explicitly in Section~\ref{sec:theoretical_example} with a detailed analysis of Ridge Regression.
}

\paragraph{Wasserstein Distance.}
We primarily measure covariate shift using the 1-Wasserstein distance.
\begin{definition}[1-Wasserstein]\label{def:w1}
The 1-Wasserstein distance between distributions $\mu$ and $\nu$ is
\[
\Wone(\mu,\nu) := \inf_{\pi\in\Pi(\mu,\nu)}\int d_{\mathcal X}(x,x')\,d\pi = \sup_{\lVert g\rVert_{\mathsf{Lip}}\le 1}\{\mathbb E_\mu[g]-\mathbb E_\nu[g]\}.
\]
\end{definition}
The dual formulation, based on Kantorovich-Rubinstein duality, is central to our analysis as it connects the distance between distributions to the behavior of Lipschitz functions~\cite[Thm.~6.15]{villani2009ot}. We leverage this property to bound the impact of data shift on a model's risk.

%%%%%%%%%%%%%%%%%%%%%%%%%%%%%%%%%%%%%%%%%%%%%%%%%%%%%%%%%

\section{From Decomposition to a Computable TRACE Diagnostic}
\label{sec:method}

Our goal is to transform the abstract risk change $|\Delta R|$ into a practical diagnostic. We begin by decomposing it into four high-level components. This initial decomposition serves as the scaffold for our analysis, with subsequent subsections detailing how each theoretical term is controlled and instantiated using specific divergence measures.

\subsection{A General Four-Term Decomposition}

The foundation of TRACE is a general inequality that connects the true risk change to a set of more tractable quantities. 

\begin{lemma}[General TRACE Decomposition]\label{lem:trace-decomp}
The absolute risk change is bounded as follows:
\[
|\Delta R|=\big\lvert R_P(\tilde Q)-R_P(Q)\big\rvert \le G_Q + D_{Q,\tilde Q} + G_{\tilde Q} + \mathsf{COSP}.
\]
where $\mathsf{COSP} := |R_{\tilde{P}}(\tilde{Q}) - R_P(\tilde{Q})|$ represents the population-level risk change for a fixed model $\tilde{Q}$ due to the shift from $P_X$ to $\tilde{P}_X$.
\end{lemma}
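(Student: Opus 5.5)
The plan is a telescoping argument followed by the triangle inequality, using only the definitions of $G_Q$, $G_{\tilde Q}$, $D_{Q,\tilde Q}$ and $\mathsf{COSP}$; none of the Assumptions are needed. All the quantities involved are finite because the loss is bounded (Assumption~\ref{assump:bounded-loss}), so every risk is a well-defined real number and adding and subtracting terms is legitimate.

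\textbf{Step 1: telescope.} I insert the intermediate quantities $\widehat R_S(Q)$, $\widehat R_{\tilde S}(\tilde Q)$ and $R_{\tilde P}(\tilde Q)$ between $R_P(Q)$ and $R_P(\tilde Q)$. Writing the risk change as a telescoping sum gives
\begin{align*}
R_P(\tilde Q)-R_P(Q) ={}& \big(R_P(\tilde Q)-R_{\tilde P}(\tilde Q)\big) + \big(R_{\tilde P}(\tilde Q)-\widehat R_{\tilde S}(\tilde Q)\big)\\
&+ \big(\widehat R_{\tilde S}(\tilde Q)-\widehat R_S(Q)\big) + \big(\widehat R_S(Q)-R_P(Q)\big).
\end{align*}
The intermediate terms cancel pairwise, so this is an identity.

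\textbf{Step 2: bound each term.} Taking absolute values and applying the triangle inequality bounds $|\Delta R|$ by the sum of the four absolute differences. I then match each one to its definition:
\begin{itemize}[nosep]
\item the first is $\mathsf{COSP}$, using $|a-b|=|b-a|$;
\item the second is $G_{\tilde Q}$;
\item the third is $D_{Q,\tilde Q}$, again by symmetry of the absolute value;
\item the fourth is $G_Q$.
\end{itemize}

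I do not expect any real obstacle. The only point needing care is choosing the telescoping order so that each bracket matches exactly one defined quantity. The path goes from the source population to the source sample, then across to the target sample, up to the target population, and finally back to the source population with the model $\tilde Q$ held fixed.
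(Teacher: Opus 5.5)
Your proof is correct and follows the paper's argument exactly. You insert $\widehat R_S(Q)$, $\widehat R_{\tilde S}(\tilde Q)$ and $R_{\tilde P}(\tilde Q)$, telescope, apply the triangle inequality, and identify each bracket with $G_Q$, $D_{Q,\tilde Q}$, $G_{\tilde Q}$ and $\mathsf{COSP}$ using symmetry of the absolute value. The paper's appendix proof goes on to bound $\mathsf{COSP}$ by $L_x(f_{\tilde Q})\,\Wone(P_X,\tilde P_X)$ via Kantorovich--Rubinstein duality, but that is the separate estimate~\eqref{eq:cosp_ot}, so your remark that no Lipschitz assumption is needed for this lemma is right.
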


The proof of this lemma, detailed in Appendix~\ref{app:proofs}, follows from a direct application of the triangle inequality by inserting the empirical risks $\widehat{R}_S(Q)$ and $\widehat{R}_{\tilde{S}}(\tilde{Q})$. The core of our framework lies in how we bound and estimate the $\mathsf{COSP}$ and $D_{Q,\tilde{Q}}$ terms. The $\mathsf{COSP}$ term can be controlled using various divergence measures. Our primary approach, detailed throughout this section, is based on Optimal Transport (OT). Under Assumption~\ref{assump:Lx}, Kantorovich-Rubinstein duality yields the well-known OT-based instantiation:
\begin{equation}
\label{eq:cosp_ot}
\mathsf{COSP} = \lvert R_{\tilde{P}}(\tilde{Q}) - R_P(\tilde{Q})\rvert \le L_x(f_{\tilde{Q}}) W_1(P_X, \tilde{P}_X).
\end{equation}
We also present a robust alternative based on Maximum Mean Discrepancy (MMD) in Section~\ref{subsec:mmd_alternative_brief}. The remainder of this section focuses on making the OT-based diagnostic fully computable.

\subsection{Controlling the Empirical Discrepancy}
\label{subsec:control_discrepancy}

The empirical discrepancy term, $D_{Q,\tilde{Q}} = |\widehat{R}_S(Q) - \widehat{R}_{\tilde{S}}(\tilde{Q})|$, encapsulates two distinct effects: the difference between the source and target \emph{samples}, and the difference between the source and target \emph{models}. To isolate these, we split $D_{Q,\tilde{Q}}$ using the triangle inequality:
\begin{equation}
\label{eq:discrepancy_split}
D_{Q,\tilde{Q}} \le \underbrace{\lvert\widehat{R}_S(Q)-\widehat{R}_{\tilde{S}}(Q)\rvert}_{\text{Empirical Data Shift}} + \underbrace{\lvert\widehat{R}_{\tilde{S}}(Q)-\widehat{R}_{\tilde{S}}(\tilde{Q})\rvert}_{\text{Model Change}}.
\end{equation}
This split enables label-efficient, high-probability control of each component. The first term, which measures the effect of data shift on a fixed model, can be controlled via transport distances. The second term, which measures the effect of the model update, can be controlled via the distance between model outputs. This separation provides clear guidance on whether to intervene on the data or on the retraining process itself. We now provide formal bounds for each term.

The model change term is controlled by the average distance between the models' outputs, a measure of \emph{algorithmic instability}. The proof follows from a direct application of the triangle inequality and Assumption~\ref{assump:Lell}.
\begin{proposition}[Model Change Bound]\label{prop:model_change_bound}
Under Assumption~\ref{assump:Lell} (Logit Lipschitz Continuity), the model change term is bounded by:
\begin{equation}\label{eq:outdist}
\big\lvert\widehat{R}_{\tilde{S}}(Q)-\widehat{R}_{\tilde{S}}(\tilde{Q})\big\rvert \le \frac{L_\ell}{n}\sum_{i=1}^n \big\lVert f_Q(\tilde x_i)-f_{\tilde Q}(\tilde x_i)\big\rVert.
\end{equation}
\end{proposition}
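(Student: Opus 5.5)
We prove the bound by writing both empirical risks as averages over the same target sample $\tilde S=\{(\tilde x_i,\tilde y_i)\}_{i=1}^n$. By definition,
\[
\widehat{R}_{\tilde S}(Q)-\widehat{R}_{\tilde S}(\tilde Q)=\frac{1}{n}\sum_{i=1}^n\Big(\ell\big(f_Q(\tilde x_i),\tilde y_i\big)-\ell\big(f_{\tilde Q}(\tilde x_i),\tilde y_i\big)\Big).
\]
Because both models are evaluated on the same points with the same labels, no distributional argument is needed. The difference of averages is simply the average of pointwise differences.

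Next we take absolute values and apply the triangle inequality for finite sums. This moves the absolute value inside the sum:
\[
\big\lvert\widehat{R}_{\tilde S}(Q)-\widehat{R}_{\tilde S}(\tilde Q)\big\rvert\le\frac{1}{n}\sum_{i=1}^n\big\lvert\ell\big(f_Q(\tilde x_i),\tilde y_i\big)-\ell\big(f_{\tilde Q}(\tilde x_i),\tilde y_i\big)\big\rvert.
\]

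We then bound each summand with Assumption~\ref{assump:Lell}. For each $i$, take $z=f_Q(\tilde x_i)$, $z'=f_{\tilde Q}(\tilde x_i)$ and $y=\tilde y_i$. The assumption holds for all $z,z'\in\mathbb{R}^C$ and all $y$, so each term is at most $L_\ell\lVert f_Q(\tilde x_i)-f_{\tilde Q}(\tilde x_i)\rVert$. Summing over $i$ and pulling out the constant $L_\ell/n$ gives \eqref{eq:outdist}.

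There is no genuine obstacle; the only point needing care is consistency of quantifiers. The constant $L_\ell$ must be uniform over labels, which Assumption~\ref{assump:Lell} guarantees. The norm $\lVert\cdot\rVert$ in \eqref{eq:outdist} must be the same norm for which that assumption is stated, for example the $\ell_2$ norm with $L_\ell\le\sqrt2$ for softmax cross-entropy. The labels $\tilde y_i$ appear only inside the loss and drop out of the final bound, which is why the right-hand side can be computed from target inputs alone.
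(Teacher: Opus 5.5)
Your proof is correct and is the same argument the paper indicates: write the difference as an average of pointwise loss differences on the shared target sample, apply the triangle inequality, and bound each term using Assumption~\ref{assump:Lell}. Your remark that the norm and the uniform-in-$y$ constant must match those in the assumption is the only care the argument needs.
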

We refer to this average output distance as the Model Change metric, denoted $\OutDiscLtwo$. The term for empirical data shift is controlled by the geometric distance between the samples, plus a statistical remainder for label noise.
\begin{lemma}[Empirical Shift Bound]\label{lem:emp-shift-label}
Under Assumptions~\ref{assump:bounded-loss} and \ref{assump:Lx}, for any $\delta\in(0,1)$, the following holds with probability at least $1-\delta$ over draws of $S$ and $\tilde S$:
\begin{equation}\label{eq:emp-shift-bound}
\big\lvert\widehat{R}_S(Q)-\widehat{R}_{\tilde{S}}(Q)\big\rvert \le L_x(f_Q) W_1(\widehat{P}_n,\widehat{\tilde{P}}_n) + 2M\sqrt{\frac{1}{2n}\log\frac{4}{\delta}},
\end{equation}
where $\widehat{P}_n$ and $\widehat{\tilde{P}}_n$ are the empirical distributions of the inputs in $S$ and $\tilde{S}$, respectively.
\end{lemma}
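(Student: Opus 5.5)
The plan is to write the empirical risk gap as a pure input-transport term plus a label-noise remainder. The labels in $S$ and $\tilde S$ come from different inputs, so $W_1$ between the input marginals cannot directly control $\widehat R_S(Q)-\widehat R_{\tilde S}(Q)$.

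Define the label-averaged loss $h(x) := \mathbb{E}[\ell(f_Q(x),y)\mid x]$. By the covariate-shift assumption the conditional $P(y\mid x)=\tilde P(y\mid x)$ is shared, so the same $h$ serves both domains. Averaging the pointwise Lipschitz bound of Assumption~\ref{assump:Lx} over $y$ shows that $h$ is $L_x(f_Q)$-Lipschitz. Treat $Q$ as fixed: independent of $S,\tilde S$, or condition on it. I then decompose
\[
\widehat R_S(Q)-\widehat R_{\tilde S}(Q) = \Big[\tfrac1n\textstyle\sum_i h(x_i)-\tfrac1n\sum_i h(\tilde x_i)\Big] + \tfrac1n\textstyle\sum_i \big(\ell(f_Q(x_i),y_i)-h(x_i)\big) - \tfrac1n\textstyle\sum_i\big(\ell(f_Q(\tilde x_i),\tilde y_i)-h(\tilde x_i)\big).
\]

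For the first bracket, apply the dual form in Definition~\ref{def:w1} to the empirical measures $\widehat P_n$ and $\widehat{\tilde P}_n$ with the test function $h/L_x(f_Q)$. This gives the bound $L_x(f_Q)\,W_1(\widehat P_n,\widehat{\tilde P}_n)$ deterministically.

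For the two remainder sums, condition on the inputs. The summands are then independent, mean zero, and each lies in an interval of length $M$, since $\ell\in[0,M]$ and $h(x)\in[0,M]$ by Assumption~\ref{assump:bounded-loss}. Two-sided Hoeffding with confidence $\delta/2$ for each sample gives that each sum is at most $M\sqrt{\log(4/\delta)/(2n)}$ in absolute value. A union bound over the two events then yields total probability $1-\delta$ and the remainder $2M\sqrt{\frac{1}{2n}\log\frac4\delta}$, as in \eqref{eq:emp-shift-bound}. Since the conditional bound holds for every realization of the inputs, it holds unconditionally.

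The main obstacle is the legitimacy of this conditioning. If $Q$ was trained on $S$, the residuals $\ell(f_Q(x_i),y_i)-h(x_i)$ are not mean zero given the inputs, because $f_Q$ depends on the $y_i$. I would handle this by stating that the bound applies with $Q$ fixed, e.g.\ evaluated on held-out $S$ drawn independently of training. This is consistent with the paper's later use of held-out estimates.
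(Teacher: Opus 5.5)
The step that fails is your claim that averaging the pointwise bound of Assumption~\ref{assump:Lx} over $y$ makes $h$ an $L_x(f_Q)$-Lipschitz function. Averaging a per-$y$ Lipschitz bound works only against a \emph{fixed} law of $y$. Here $h(x)$ and $h(x')$ average against two different conditionals, $P(\cdot\mid x)$ and $P(\cdot\mid x')$. What you actually get is
\[
|h(x)-h(x')| \le L_x(f_Q)\,d_{\mathcal X}(x,x') + \bigl|\mathbb{E}_{y\sim P(\cdot\mid x)}[\ell(f_Q(x'),y)]-\mathbb{E}_{y\sim P(\cdot\mid x')}[\ell(f_Q(x'),y)]\bigr|.
\]
The second term is bounded only by $M$ times the total variation distance between the two conditionals, and nothing in the hypotheses controls it.

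This is not a technicality: the statement is false as written. Take $\mathcal X=\mathbb R$, with labels $y=1$ if $x>0$ and $y=0$ otherwise in both domains, so covariate shift holds. Let $P_X$ and $\tilde P_X$ be the point masses at $-1$ and $+1$ respectively. Let $f_Q\equiv z$ be a constant logit vector with $\ell(z,0)\neq\ell(z,1)$, for example a confident class-$0$ predictor, which is exactly what one would fit on $S$ since all its labels are $0$. Then $\ell(f_Q(x),y)$ does not depend on $x$, so Assumption~\ref{assump:Lx} holds with $L_x(f_Q)=\epsilon$ for any $\epsilon>0$, and $W_1(\widehat P_n,\widehat{\tilde P}_n)=2$. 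Yet $|\widehat R_S(Q)-\widehat R_{\tilde S}(Q)|=|\ell(z,1)-\ell(z,0)|$ with certainty. This exceeds $2\epsilon+2M\sqrt{\tfrac{1}{2n}\log\tfrac{4}{\delta}}$ once $\epsilon$ is small and $n$ is large. Here $Q$ is independent of $S$, so your ``$Q$ fixed'' caveat does not rescue the argument.

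The paper's proof follows your route step for step and makes the same move (``this property is preserved under expectation''). So you have reproduced its argument faithfully, flaw included. A true statement needs an extra hypothesis, in one of two forms:
\begin{itemize}
\item Assume the conditional risk map itself, $\phi_Q(x)=\mathbb{E}[\ell(f_Q(x),y)\mid x]$ (your $h$), is $L$-Lipschitz. This is the OT counterpart of what the paper does assume for MMD in Assumption~\ref{assump:mmd-rkhs}.
\item Assume $x\mapsto P(\cdot\mid x)$ is $L_P$-Lipschitz in total variation. This gives the constant $L_x(f_Q)+ML_P$ in place of $L_x(f_Q)$.
\end{itemize}
With either hypothesis the rest of your proof is sound, and in two places it is more careful than the paper's. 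First, conditioning on the inputs, each residual lies in an interval of length $M$, which is what actually yields $M\sqrt{\tfrac{1}{2n}\log\tfrac{4}{\delta}}$ per sample; the paper's stated range $[-M,M]$ would double this. Second, you correctly note that the source-side Hoeffding step needs $Q$ to be independent of the labels in $S$, a point the paper passes over.
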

The proof (Appendix~\ref{app:proofs}) uses Kantorovich-Rubinstein duality to control the effect of the input shift and Hoeffding's inequality to bound the finite-sample noise from the labels.

%%%%%%%%%%%%%%%%%%%%%%%%%%%%%%%%%%%%%%%%

\subsection{From Population Quantities to Computable Proxies}
\label{subsec:computable_proxies}

The bounds derived so far involve unobservable population quantities, such as the true Wasserstein distance $W_1(P_X, \tilde{P}_X)$ and the global Lipschitz constants $L_x(f)$. To build a practical diagnostic, we must replace these with reliable, computable estimators based on the available data.

\paragraph{Controlling the Population Wasserstein Distance (high probability).}
The true distance $W_1(P_X,\tilde P_X)$ is unknown, but we can relate it to the empirical
$W_1(\widehat P_n,\widehat{\tilde P}_{\tilde n})$ via a triangle inequality together with
finite-sample concentration of empirical measures. Fix $\delta\in(0,1)$ and let
$\alpha_d:=\max\{d,2\}$. Under finite first–moment and mild tail/moment conditions in
$\mathbb{R}^d$ (as in~\cite{Fournier2015,Weed2019}), there exist distribution-dependent
constants $C_X,C_{\tilde X}>0$ such that, with probability at least $1-\delta$,
\begin{equation}\label{eq:w1-triangle}
W_1(P_X,\tilde P_X)
\;\le\;
W_1(\widehat P_n,\widehat{\tilde P}_{\tilde n})
\;+\;
\underbrace{C_X\!\left(\tfrac{\log(4/\delta)}{n}\right)^{\!1/\alpha_d}}_{:=\,\varepsilon_n(\delta)}
\;+\;
\underbrace{C_{\tilde X}\!\left(\tfrac{\log(4/\delta)}{\tilde n}\right)^{\!1/\alpha_d}}_{:=\,\tilde\varepsilon_{\tilde n}(\delta)}.
\end{equation}
Fix $\delta\in(0,1)$ and let $\alpha_d:=\max\{d,2\}$ (the dimension–dependent exponent
appearing in standard $W_1$ concentration inequalities).
Equivalently, one can use the tail form
$\Pr\!\big(W_1(P_X,\widehat P_n)>\epsilon\big)\le A_X \exp\{-B_X n\,\epsilon^{\alpha_d}\}$ (and
analogously for $\tilde P_X$), which yields~\eqref{eq:w1-triangle} by choosing
$\varepsilon_n(\delta),\tilde\varepsilon_{\tilde n}(\delta)$ accordingly and applying a union
bound. In practice we compute $W_1(\widehat P_n,\widehat{\tilde P}_{\tilde n})$ (in $x$ or in a
feature space $h(x)$) as the main term and treat $\varepsilon_n,\tilde\varepsilon_{\tilde n}$ as
explicit, vanishing remainder terms.

\paragraph{Feature-Space Transport for Stability and Semantics.}
Computing transport directly on high-dimensional raw inputs (e.g., pixels) is often computationally unstable and semantically meaningless. We therefore compute distances in a more meaningful feature space defined by a map $h:\mathcal{X}\to\mathbb{R}^k$. Assuming the map $h$ is bi-Lipschitz, we can upper- and lower-bound the Wasserstein distance with its feature-space counterpart, and therefore use it as a stable surrogate. In our practical diagnostic, we use the feature-space distance $\Woneh(\widehat P_n,\widehat{\tilde P}_n)=W_1(h\#\widehat{P}_n, h\#\widehat{\tilde{P}}_n)$ and introduce a representation scale factor $c_h$ to account for the mapping. We treat $c_h$ as a reported hyperparameter (defaulting to $1$ for normalized features). For implementation, we use the fast and stable debiased Sinkhorn divergence as our proxy for the Wasserstein distance~\cite{cuturi2013sinkhorn, Feydy2019}. {\color{black}For implementation, we use the fast and stable debiased Sinkhorn divergence as our proxy for the Wasserstein distance (see Appendix~\ref{app:common-impl} for computational details and scalability)}.

\paragraph{Estimating Lipschitz Constants with Gradient Norm Proxies.}
The global input-Lipschitz constant $L_x(f) = \sup_{x,y} \|\nabla_x \ell(f(x),y)\|$ is computationally intractable. We instead estimate it using a robust, data-driven proxy. On a held-out set of $m$ examples, we compute the empirical distribution of input-gradient norms and take the $q$-quantile as our estimate:
\begin{align*}
    \widehat{L}_x^{(q)}(f) := \mathrm{Quantile}_q\left(\{\|\nabla_x\ell(f(x_i),y_i)\|\}_{i=1}^m\right).
\end{align*}
This high-quantile proxy (e.g., $q=0.99$) provides a practical worst-case estimate of the model's sensitivity on real data, while being robust to outliers. The Dvoretzky-Kiefer-Wolfowitz (DKW) inequality~\cite{Massart1990} provides a finite-sample guarantee on the quality of this estimate. With probability at least $1-\eta$, our empirical quantile $\widehat{L}_x^{(q)}(f)$ lies within an $\epsilon$-band of the true quantile $L_x^{(q)}(f)$, where $\epsilon = \sqrt{\frac{1}{2m}\log\frac{2}{\eta}}$.

\subsection{The Practical TRACE Diagnostic}
\label{subsec:practical_trace}

We now assemble the components from the preceding subsections to present the final, fully computable TRACE diagnostic. This is a high-probability bound on the total risk change $|\Delta R|$, where each theoretical quantity has been replaced by its practical, data-driven proxy.

\begin{corollary}[Computable TRACE Diagnostic]\label{cor:practical-trace}
Fix confidence levels $\delta, \eta \in (0,1)$. With probability at least $1-\delta-\eta$ over the random draws of the training sets $S, \tilde{S}$ and any held-out validation sets, the following bound holds, conditioned on the event that the gradient-quantile proxies are valid upper bounds for the true Lipschitz constants (i.e., $\widehat{L}_x^{(q)}(f) \ge L_x(f)$ for $f \in \{f_Q, f_{\tilde{Q}}\}$):
\begin{align}
\riskdiff \le{}& \underbrace{\widehat{G}_Q^{\mathrm{val}} + \widehat{G}_{\tilde{Q}}^{\mathrm{val}}}_{\text{Validation Gaps}}
+ \underbrace{\frac{L_\ell}{n}\sum_{i=1}^n \big\lVert f_Q(\tilde x_i)-f_{\tilde Q}(\tilde x_i)\big\rVert}_{\text{Model Change (Output Distance)}} \nonumber \\
&+ \underbrace{\left(\widehat{L}_x^{(q)}(f_Q) + \widehat{L}_x^{(q)}(f_{\tilde{Q}})\right) c_h \Woneh(\widehat{P}_n,\widehat{\tilde{P}}_n)}_{\text{Empirical Shift Penalty}} \label{eq:practical_trace} \\
&+ \underbrace{2M\sqrt{\frac{1}{2n}\log\frac{4}{\delta}}}_{\text{Label Noise Remainder}} 
+ \underbrace{M\left(\sqrt{\frac{1}{2m}\log\frac{2}{\eta}}+\sqrt{\frac{1}{2\tilde{m}}\log\frac{2}{\eta}}\right)}_{\text{Validation Set Error}} + \underbrace{\widehat{L}_x^{(q)}(f_{\tilde{Q}}) c_h \varepsilon_n}_{\text{Population Residual}}. \nonumber
\end{align}
All other quantities are defined in the preceding subsections. The validation gap estimators are defined as $\widehat{G}_Q^{\mathrm{val}} := |\widehat{R}_V(Q)-\widehat{R}_S(Q)|$ and $\widehat{G}_{\tilde{Q}}^{\mathrm{val}} := |\widehat{R}_{\tilde{V}}(\tilde{Q})-\widehat{R}_{\tilde{S}}(\tilde{Q})|$, where $V \subset S$ and $\tilde{V} \subset \tilde{S}$ are validation splits of size $m$ and $\tilde{m}$, respectively.
\end{corollary}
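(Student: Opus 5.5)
The plan is to start from Lemma~\ref{lem:trace-decomp} and replace each of its four terms by the computable pieces developed in Section~\ref{sec:method}, then collect the failure probabilities with a union bound. Concretely, we write
\[
|\Delta R| \le G_Q + G_{\tilde Q} + D_{Q,\tilde Q} + \mathsf{COSP}
\]
and bound each term separately.

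\emph{Step 1: the discrepancy term.} We split $D_{Q,\tilde Q}$ via \eqref{eq:discrepancy_split}. Proposition~\ref{prop:model_change_bound} bounds the model-change part deterministically by the output-distance term. Lemma~\ref{lem:emp-shift-label} bounds the empirical-shift part, with probability at least $1-\delta$, by
\[
L_x(f_Q)\,W_1(\widehat P_n,\widehat{\tilde P}_n) + 2M\sqrt{\tfrac{1}{2n}\log\tfrac4\delta}.
\]
This is the label-noise remainder.

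\emph{Step 2: the $\mathsf{COSP}$ term.} By \eqref{eq:cosp_ot}, $\mathsf{COSP}\le L_x(f_{\tilde Q})W_1(P_X,\tilde P_X)$. Using \eqref{eq:w1-triangle}, we replace the population distance by $W_1(\widehat P_n,\widehat{\tilde P}_n)$ plus the residual $\varepsilon_n$. Here we absorb $\tilde\varepsilon$ into $\varepsilon_n$ since the sample sizes are equal, or we simply read $\varepsilon_n$ as the combined remainder.

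Both Lipschitz-weighted distances are then passed to feature space. We use the bi-Lipschitz property of $h$, with scale $c_h$, to get $W_1(\widehat P_n,\widehat{\tilde P}_n)\le c_h\Woneh(\widehat P_n,\widehat{\tilde P}_n)$. On the conditioning event, $L_x(f)\le \widehat L_x^{(q)}(f)$ for both models. Summing gives the empirical shift penalty $(\widehat L_x^{(q)}(f_Q)+\widehat L_x^{(q)}(f_{\tilde Q}))c_h\Woneh$ and the population residual $\widehat L_x^{(q)}(f_{\tilde Q})c_h\varepsilon_n$.

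\emph{Step 3: the generalization gaps.} We insert the validation risk: $G_Q\le |R_P(Q)-\widehat R_V(Q)| + \widehat G_Q^{\mathrm{val}}$. Since the loss lies in $[0,M]$ by Assumption~\ref{assump:bounded-loss}, Hoeffding on the $m$ validation points gives $|R_P(Q)-\widehat R_V(Q)|\le M\sqrt{\frac{1}{2m}\log\frac2\eta}$. The same argument applies to $\tilde Q$ with $\tilde m$. The main obstacle is here. Hoeffding requires $V$ to be independent of the training of $Q$, yet the statement writes $V\subset S$. I would interpret $V$ as held out from training, so that $Q$ is fixed conditional on the training portion. Under that reading the concentration is valid, and $\widehat R_S$ is to be understood as the empirical risk on the data actually used.

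\emph{Step 4: probability accounting.} The remaining issue is that Lemma~\ref{lem:emp-shift-label}, \eqref{eq:w1-triangle} and the two validation Hoeffding bounds each consume confidence. I would allocate $\delta$ across the shift events, noting the $\log(4/\delta)$ factors already reflect splitting into sub-events. I would allocate $\eta$ across the two validation events: using $\log(2/\eta)$ per model in fact gives failure probability $2\eta$, so either one redefines $\eta\to\eta/2$ or accepts this constant slack. A union bound then yields the stated guarantee with probability at least $1-\delta-\eta$, conditional on the gradient-quantile event. The DKW statement is used only to justify when that conditioning event is plausible, not inside the bound.
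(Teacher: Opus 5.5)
Your proposal follows the paper's own argument: Lemma~\ref{lem:trace-decomp}, the split \eqref{eq:discrepancy_split} with Proposition~\ref{prop:model_change_bound} and Lemma~\ref{lem:emp-shift-label}, then \eqref{eq:w1-triangle}, substitution of the proxies, and a union bound; you also write out correctly the validation-Hoeffding step for the generalization gaps (which the paper leaves implicit), under your reasonable reading that $V,\tilde V$ are held out from training. The remaining issues are bookkeeping that the paper's one-line union bound also glosses over: Lemma~\ref{lem:emp-shift-label} spends its entire $\delta$ on its own two label-noise events and \eqref{eq:w1-triangle} spends another $\delta$, so with the displayed constants you get $1-2\delta-2\eta$ rather than $1-\delta-\eta$ unless you rescale the confidence levels inside the logarithms; separately, the factor $c_h$ on the residual appears only if you apply the concentration to $h\#P_X,h\#\tilde P_X$ after the bi-Lipschitz transfer, whereas your order of steps yields $\widehat L_x^{(q)}(f_{\tilde Q})(\varepsilon_n+\tilde\varepsilon_{\tilde n})$.
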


\begin{proof}
The proof is a direct assembly. We start with the main decomposition from Lemma~\ref{lem:trace-decomp}. We bound the empirical discrepancy term $D_{Q,\tilde{Q}}$ using the split in \eqref{eq:discrepancy_split}, followed by Proposition~\ref{prop:model_change_bound} and Lemma~\ref{lem:emp-shift-label}. We control the population shift penalty using the bound in \eqref{eq:w1-triangle}. Finally, we replace all theoretical quantities ($L_x$, $W_1$, etc.) with their computable proxies defined in Section~\ref{subsec:computable_proxies}. The high-probability statement follows from a union bound over the individual probabilistic guarantees.
\end{proof}

\paragraph{Interpretation.}
This theorem provides a calibrated, conservative diagnostic. The final bound, which we denote $\widehat{\mathcal{B}}$, is controlled with high probability. While the Lipschitz factors rely on high-quantile proxies rather than certified global bounds, the overall diagnostic provides a reliable estimate of the scale of the risk change and, more importantly, an interpretable attribution of its sources.

\paragraph{On the two Lipschitz factors.}
A key feature of the bound is the term $(\widehat{L}_x^{(q)}(f_Q) + \widehat{L}_x^{(q)}(f_{\tilde{Q}}))$ that multiplies the empirical shift distance $\Woneh$. This arises because the total risk change involves two distinct ``shift'' effects that are combined through the triangle inequality. The constant $L_x(f_Q)$ is needed to control the \emph{empirical shift}, $|\widehat{R}_S(Q) - \widehat{R}_{\tilde{S}}(Q)|$, which depends on the sensitivity of the source model. The constant $L_x(f_{\tilde{Q}})$ is needed to control the \emph{population shift}, $|R_{\tilde{P}}(\tilde{Q}) - R_P(\tilde{Q})|$, which depends on the sensitivity of the target model. Both contribute to the final bound on the discrepancy measured by $\Woneh$.

\subsection{The TRACE Diagnostic Pipeline in Practice}
\label{subsec:trace-pipeline}
We now summarize the entire process of computing the practical TRACE diagnostic, $\widehat{\mathcal{B}}$, from Corollary~\ref{cor:practical-trace}. The pipeline, illustrated in Figure~\ref{fig:trace-flow-vertical}, is a sequence of parallel estimation steps that feed into a final aggregation stage. This flowchart provides a self-contained summary of the full diagnostic computation.
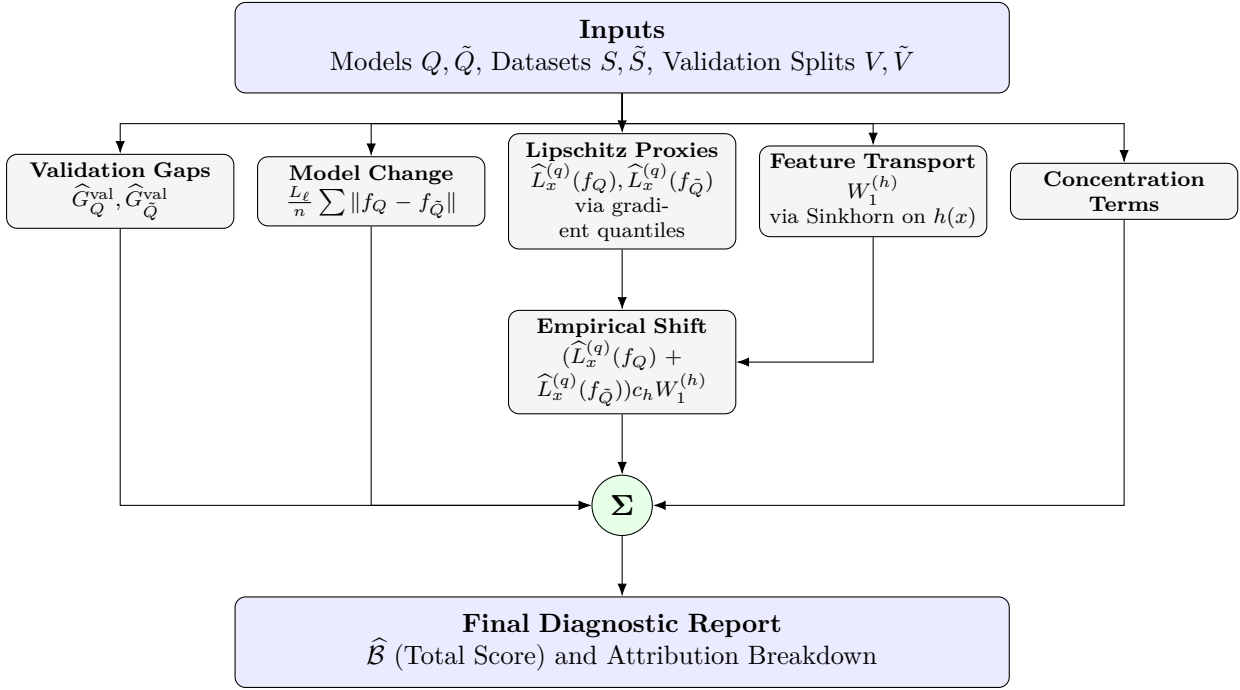
\begin{figure}[t]
    \centering
    \resizebox{\columnwidth}{!}{%
    \begin{tikzpicture}[
        node distance=8mm and 3mm,
        >=LaTeX,
        io/.style={draw, rounded corners, fill=blue!8, text width=10cm, align=center, minimum height=12mm, font=\small},
        comp/.style={draw, rounded corners, fill=gray!8, text width=2.8cm, align=center, inner sep=3pt, font=\scriptsize},
        sum/.style={draw, circle, fill=green!10, minimum size=8mm},
        outbox/.style={draw, rounded corners, fill=blue!8, text width=10cm, align=center, minimum height=12mm, font=\small}
    ]

    % Define nodes
    \node[io] (inputs) {\textbf{Inputs} \\ Models $Q, \tilde{Q}$, Datasets $S, \tilde{S}$, Validation Splits $V, \tilde{V}$};

    \node[comp, below=of inputs, xshift=-6.65cm] (comp1) {\textbf{Validation Gaps} \\ $\widehat{G}_Q^{\mathrm{val}}, \widehat{G}_{\tilde{Q}}^{\mathrm{val}}$};
    \node[comp, right=of comp1] (comp2) {\textbf{Model Change} \\ $\frac{L_\ell}{n}\sum\|f_Q - f_{\tilde{Q}}\|$};
    \node[comp, right=of comp2] (part3) {\textbf{Lipschitz Proxies} \\ $ \widehat{L}_x^{(q)}(f_Q), \widehat{L}_x^{(q)}(f_{\tilde{Q}}) $ \\ via gradient quantiles};

    \node[comp, below=of part3] (comp3) {\textbf{Empirical Shift} \\ $(\widehat{L}_x^{(q)}(f_Q)+\widehat{L}_x^{(q)}(f_{\tilde{Q}}))c_h\Woneh$};
    
    \node[comp, right=of part3] (comp4) {\textbf{Feature Transport} \\ $ \Woneh $ \\ via Sinkhorn on $h(x)$};
    \node[comp, right=of comp4] (comp5) {\textbf{Concentration Terms}};
    
    \node[sum, below=of comp3] (summation) {$\boldsymbol{\Sigma}$};

    \node[outbox, below=of summation] (outputs) {\textbf{Final Diagnostic Report} \\ $\widehat{\mathcal{B}}$ (Total Score) and Attribution Breakdown};

    % Arrows
    \draw[->] (inputs.south) -- +(0,-0.4) -| (comp1.north);
    \draw[->] (inputs.south) -- +(0,-0.4) -| (comp2.north);
    \draw[->] (inputs.south) -- (part3.north);
    \draw[->] (inputs.south) -- +(0,-0.4) -| (comp4.north);
    \draw[->] (inputs.south) -- +(0,-0.4) -| (comp5.north);

    \draw[->] (part3.south) -- (comp3.north);

    \draw[->] (comp4.south) |- (comp3.east);

    \draw[->] (comp1.south) |- (summation.west);
    \draw[->] (comp2.south) |- (summation.west);
    \draw[->] (comp3.south) -- (summation.north);
    %\draw[->] (comp4.south) |- (summation.east);
    \draw[->] (comp5.south) |- (summation.east);
    
    \draw[->] (summation) -- (outputs);

    \end{tikzpicture}
    }
    \caption{The TRACE diagnostic pipeline. The process takes models and data as input, computes the five core components of the bound in parallel, aggregates them, and produces the final diagnostic score $\widehat{\mathcal{B}}$ and its attribution report.}
    \label{fig:trace-flow-vertical}
\end{figure}

\subsection{Alternative Shift Control via MMD}
\label{subsec:mmd_alternative_brief}

As an alternative to the geometric control offered by Optimal Transport, the \textsf{COSP} term can also be controlled using the functional-analytic framework of Maximum Mean Discrepancy (MMD). This approach bounds the risk change by the product of the MMD between the distributions and the RKHS norm of the model's risk surface. The MMD-based variant is often a lower-variance and more stable estimator, particularly in high-dimensional or low-sample regimes, though it can sometimes result in a more conservative bound. We provide a full, parallel derivation of the MMD-based TRACE diagnostic, including all formal assumptions, propositions, and proofs, in Appendix~\ref{app:mmd_details}. Our implementation includes both the OT and MMD variants, and we recommend reporting the kernel choice, bandwidth, and other relevant hyperparameters when using the MMD diagnostic.

%%%%%%%%%%%%%%%%%%%%%%%%%%%%%%%%%%%%%%%%%%%%%%%%%%%
\section{Theoretical Example: Ridge Regression}
\label{sec:theoretical_example}

To build intuition for the TRACE diagnostic and to verify its scaling behavior, we analyze an idealized case of Ridge Regression under a simple covariate shift. We consider a linear predictor $f_w(x) = w^\top x$ with a squared error loss, trained in a population setting where generalization gaps are zero by construction. The data is generated from a true linear relationship $y = w^{*\top}x + \epsilon$. The covariate shift is a simple translation of the input distribution's mean: the source distribution is $P_X = \mathcal{N}(0, I_d)$, and the target is $\tilde{P}_X = \mathcal{N}(\mu, I_d)$. The magnitude of the shift is controlled by $\|\mu\|_2$.

In this setting, the source-trained model $Q$ (with weights $w_Q$) and the target-trained model $\tilde{Q}$ (with weights $w_{\tilde{Q}}$) have different closed-form solutions due to the interaction between the data shift and the L2 regularization (see Appendix~\ref{app:ridge_derivation} for full derivations). With the generalization gaps removed, the TRACE bound simplifies to its core components: the covariate shift penalty and the model change (algorithmic instability) penalty.

\paragraph{Analysis of the TRACE Bound Components.}
The \textbf{covariate shift penalty} is determined by the Wasserstein distance between the two Gaussian distributions, which is simply the Euclidean distance between their means. This term thus scales linearly with the magnitude of the shift:
\begin{align*}
    L_x(f_{\tilde{Q}}) W_1(P_X, \tilde{P}_X) = L_x(f_{\tilde{Q}}) \|\mu\|_2 = O(\|\mu\|_2).
\end{align*}
The \textbf{model change penalty} is controlled by the expected output distance, which depends on the difference in the learned weights, $\Delta w := w_{\tilde{Q}} - w_Q$. As we show in Appendix~\ref{app:ridge_derivation}, for small shifts, the magnitude of this weight difference is a second-order effect, with $\|\Delta w\|_2 = O(\|\mu\|_2^2)$. Consequently, the model change penalty is of a lower order than the shift penalty in this setting.

\paragraph{Analysis of the True Risk Difference.}
A detailed derivation in Appendix~\ref{app:ridge_derivation} shows that, for small shifts, the true risk difference scales linearly with the magnitude of the shift, i.e., $|R_P(w_Q) - R_P(w_{\tilde{Q}})| = O(\|\mu\|_2)$.

\paragraph{Conclusion and Insight.}
This idealized example provides a powerful insight. The true risk difference scales linearly with the shift magnitude $\|\mu\|_2$. The dominant term in the TRACE bound, the covariate shift penalty, also scales linearly with $\|\mu\|_2$. This demonstrates that the TRACE inequality is \emph{asymptotically sharp in its scaling}, correctly capturing the rate at which the risk difference grows. While the bound's numerical value may be conservative due to worst-case constants, its components meaningfully reflect the underlying dynamics of the problem.

%%%%%%%%%%%%%%%%%%%%%%%%%%%%%%%%%%%%%%%%%%%%%%%%%%%

\section{Experiments}
\label{sec:experiments}

We empirically validate TRACE across a range of benchmarks to answer four key questions: (i) Does the diagnostic correlate with the true risk change $|\Delta R|$? (ii) Is its attribution meaningful? (iii) Is the pipeline practical? and (iv) Can it power an effective deployment gate?

\subsection{Experimental Methodology}

\paragraph{Common Setup.}
Across all experiments, we evaluate the risk change $|\Delta R|$ on a held-out source test set. The TRACE diagnostic $\widehat{\mathcal{B}}$ is computed using the OT-based instantiation by default. For vision tasks, we use features from a frozen ResNet-50. For a fair comparison of the model change and data shift components, we introduce a label-free calibrated score, TRACE-Proxy~\eqref{eq:trace-proxy-app}, used for rank correlation analysis. For deployment gating, we use the uncalibrated, theoretically-grounded TRACE Gate Score~\eqref{eq:trace-w-app}. Full implementation details, hyperparameter settings, and the formula for TRACE-Proxy are in Appendix~\ref{app:exp_details}.

{\color{black}\paragraph{TRACE-Proxy and Attribution.} To address the issue of differing component scales, we use a label-free calibration constant $\hat{c}_h$ to create the TRACE-Proxy score. This constant, estimated once on a small, unlabeled development set, normalizes the scale of the covariate-shift term relative to the model-change term. This ensures the components are comparable in magnitude, enabling the meaningful attribution and ranking analysis that follows.}

\subsection{Synthetic Benchmarks: Validation and Attribution}

\paragraph{Goal.}
To validate that the TRACE diagnostic tracks $|\Delta R|$ and that its attribution is meaningful in controlled settings.

{\color{black}
\paragraph{Setup and Results.}
We applied geometric shifts of increasing severity to two 2D datasets (Gaussian blobs and two moons). To validate individual components, we designed experiments to isolate their effects (e.g., modulating learning rates to isolate model change, or applying geometric transforms to isolate data shift), with full details in Appendix~\ref{app:synth-details}. Across all settings, the TRACE-Proxy score maintained a strong monotonic association with the true risk change (Spearman's $\rho > 0.98$ for blobs, $\rho > 0.75$ for moons). A detailed attribution analysis, provided in Appendix~\ref{app:exp_details}, confirms that the diagnostic terms behave as expected: the \emph{Empirical Shift} term grows with the geometric severity of the shift, while the \emph{Model Change} term reflects retraining instability. This result confirms the validity and attributional power of TRACE.
}

\subsection{Case Study 1: Active Selection on {\normalfont\textsc{DomainNet}}}

\paragraph{Goal.}
To show how TRACE provides crucial insights in a realistic scenario where data shift and model change interact.

\paragraph{Setup and Results.}
We simulate active domain adaptation, fine-tuning a model trained on \texttt{real} images by iteratively selecting and labeling small batches from the \texttt{painting} domain. We compare a strategy that minimizes Wasserstein distance (\texttt{w1min}) against one that minimizes MMD (\texttt{mmdmin}). As detailed in Table~\ref{tab:trace-domainnet-main}, TRACE exposes a critical trade-off. While the \texttt{mmdmin} strategy successfully reduces the MMD distance, TRACE reveals that the \emph{Model Change} term simultaneously increases, indicating model instability and leading to unreliable performance on the source domain. The \texttt{w1min} strategy, in contrast, improves both metrics, leading to consistent gains. The key insight is that monitoring data shift alone is insufficient; TRACE provides a necessary, unified view of both data alignment and model stability.
% This table is a simplified version for the main body.
\begin{table}[t]
\centering
\caption{TRACE diagnostics for active selection on \textsc{DomainNet}. TRACE reveals that minimizing MMD distance leads to a harmful increase in the \textbf{Model Change} (measured by output discrepancy $\OutDiscLtwo$), a trade-off that \texttt{w1min} successfully navigates. Columns show: \textbf{Labeled} samples used; \textbf{Distance} (feature-space alignment); \textbf{$\OutDiscLtwo$} (model output instability); and \textbf{$|\Delta R|$} (true source risk degradation).}
\label{tab:trace-domainnet-main}
\small
\setlength{\tabcolsep}{5.2pt}
\begin{tabular}{lrrrrrr}
\toprule
\bf Policy & \bf Labeled & \bf Distance $\downarrow$ & \bf $\OutDiscLtwo$ $\uparrow$ & \bf $|\Delta R|$ \\
\midrule
\texttt{w1min} & 50  & 0.0182 & 42.23 & 5.82 \\
($W_1$)      & 250 & \textbf{0.0034} & \textbf{44.68} & \textbf{5.11} \\
\midrule
\texttt{mmdmin} & 50  & 0.1772 & 27.77 & 3.35 \\
(MMD)         & 250 & \textbf{0.0984} & \textbf{66.56} & 4.48 \\
\bottomrule
\end{tabular}
\end{table}

{\color{black}
We stress that this study does not declare \texttt{w1min} universally superior. Rather, it demonstrates TRACE's diagnostic value. An engineer monitoring only the \textbf{Distance} column would consider \texttt{mmdmin} successful as alignment improves. However, TRACE reveals the hidden cost: the \textbf{Model Change} term (\textbf{$\OutDiscLtwo$}) simultaneously explodes (27 $\to$ 66), signaling severe instability. This instability drove the poor source performance (\textbf{$|\Delta R|$}). TRACE thus serves as an essential lens, revealing trade-offs invisible to single-metric monitoring.
}

\subsection{Case Study 2: Deployment Gate on {\normalfont\textsc{DomainNet}}}

\paragraph{Goal.}
To evaluate the TRACE Gate Score on the high-stakes task of automatically approving or rejecting a candidate model update.

This setting mirrors a common MLOps workflow where a new model must pass a ``gate'' before replacing the current model. Crucially, such decisions often rely on proxy metrics without fresh labels on the anchor domain. TRACE is designed to plug directly into this workflow, providing an automatic veto signal to prevent harmful updates from being deployed.

\paragraph{Setup and Results.}
We simulate replacing a model trained on \texttt{real} images with 20 candidates fine-tuned on the \texttt{sketch} domain. Following standard practice, we define an update as \textbf{``harmful''} if the true risk on the source test set increases by more than a predefined threshold $\tauval$. Formally, an update is harmful if $R_P(\tilde{Q}) - R_P(Q) > \tauval$. This frames the problem as a binary classification task for computing AUROC/AUPRC. We use the TRACE Gate Score to predict which updates will be harmful and compare its performance to a standard Maximum Softmax Probability (MSP) baseline~\cite{hendrycks2017baseline}. The TRACE score is an exceptionally strong predictor of deployment risk, achieving a Spearman's rank correlation with true harm of $\rho \approx \mathbf{0.94}$, significantly outperforming the MSP baseline. As a binary classifier for ``harmful'' updates, the score yields an AUROC/AUPRC of \textbf{1.0} at a key decision threshold, demonstrating its ability to perfectly separate safe from unsafe updates in this setting (see Table~\ref{tab:gate-tau-main}). This result confirms that TRACE can be operationalized into a reliable, label-efficient safety gate for real-world ML systems.

\begin{table}[h]
\caption{Deployment gate performance. TRACE robustly outperforms the MSP baseline in classifying harmful model updates.}
\label{tab:gate-tau-main}
\centering
\begin{center}
\small
\setlength{\tabcolsep}{6pt}
\begin{tabular}{l c c}
\toprule
\bf Method & Rank Correlation ($\rho$) & AUROC (at $\tau=0.13$) \\ 
\midrule
TRACE Gate Score & \textbf{0.944} & \textbf{1.000} \\
MSP Baseline     & -0.874         & 0.980 \\
\bottomrule
\end{tabular}
\end{center}
\end{table}

{\color{black}The strong discrimination power (AUROC=1.0) and rank correlation ($\rho=0.94$) validate TRACE's core theoretical property: the bound correctly captures the \emph{relative scaling} of risk change, consistent with the behavior analyzed in Section~\ref{sec:theoretical_example}. This structural fidelity enables reliable ordering of candidate models even when exact risk values are unknown.}

%%%%%%%%%%%%%%%%%%%%%%%%%%%%%%%%%%%%%%%%%%%%%%%%%%%%%%

\section{Conclusion}
\label{sec:conclusion}

We introduced TRACE, a framework for attributing the risk change $|\Delta R|$ that occurs when a model is updated on shifted data. TRACE provides a principled, high-probability diagnostic that disentangles this change into interpretable components: generalization error, data shift (measured via OT or MMD), and model change (measured via output distance). Our experiments show that this decomposition is not only theoretically sound but practically effective.

We emphasize that TRACE's diagnostic value derives from its \textbf{structural correctness}: the bound scales correctly with shift magnitude (analyzed in Sec.~\ref{sec:theoretical_example}) and its components maintain strong monotonic relationships with the true risk change. Our experiments validate this via strong rank correlation ($\rho \approx 0.94$) and near-perfect deployment gate discrimination (AUROC=1.0 in Table~\ref{tab:gate-tau-main}). This demonstrates that the bound's theoretical structure, which decomposes risk change into interpretable and estimable components, provides the foundation for practical diagnostic and ranking applications in model replacement.

\paragraph{Limitations and Future Work.}
While TRACE provides a powerful diagnostic lens, our current analysis is focused on covariate shift. We view this as both a strength and a limitation. It allows for a complete, end-to-end analysis of one of the most common failure modes in practice (e.g., sensor changes, demographic drift), enabling sharp, actionable diagnostics. However, TRACE does not yet provide formal guarantees for label or concept drift. Extending our two-model attribution perspective to these richer shifts is an important direction for future work. Looking ahead, this work opens the door to \textbf{anchor-aware learning}, a new paradigm where the TRACE diagnostic is minimized directly during training to enforce performance guarantees on a chosen anchor domain. This vision connects our attribution framework to future work in certified robustness, conformal prediction for per-update guarantees, and the development of risk-sensitive, multi-anchor safety gates.

%%%%%%%%%%%%%%%%%%%%%%

\section*{Acknowledgments}
We acknowledge the use of Large Language Models (LLMs) for assistance with language editing and LaTeX formatting; all research ideas, mathematical derivations, and experimental findings are the original work of the authors.

%%%%%%%%%%%%%%%%%%%%%%

%\bibliographystyle{IEEEtran}
%\bibliography{Reference}

% Generated by IEEEtran.bst, version: 1.14 (2015/08/26)

%%%%%%%%%%%%%%%%%%%%%%
%Appendix

\appendix

%%%%%%%%%%%%%%%%%%%%%%%%%%%%%%%%%%%%%%%%%%%%%%%%%%%%%
\section{Proofs of Lemma~\ref{lem:trace-decomp} and Lemma~\ref{lem:emp-shift-label}}\label{app:proofs}

\begin{proof}[Proof of Lemma~\ref{lem:trace-decomp}]
Insert and subtract $\widehat R_S(Q)$, $\widehat R_{\tilde S}(\tilde Q)$, and $R_{\tilde P}(\tilde Q)$, then apply the triangle inequality:
\begin{align*}
\big|R_P(Q)-R_P(\tilde Q)\big|
&\le \underbrace{\big|R_P(Q)-\widehat R_S(Q)\big|}_{G_Q} + \underbrace{\big|\widehat R_S(Q)-\widehat R_{\tilde S}(\tilde Q)\big|}_{D_{Q,\tilde Q}} + \underbrace{\big|\widehat R_{\tilde S}(\tilde Q)-R_{\tilde P}(\tilde Q)\big|}_{G_{\tilde Q}} \\ &+ \underbrace{\big|R_{\tilde P}(\tilde Q)-R_P(\tilde Q)\big|}_{\mathsf{COSP}} 
\end{align*}

\textbf{Bounding the population shift term.}
Define $\phi_{\tilde Q}(x):=\mathbb{E}_{y\sim P(y\mid x)}[\ell(f_{\tilde Q}(x),y)]$. By Assumption~\ref{assump:Lx}, $\phi_{\tilde Q}$ is $L_x(f_{\tilde Q})$-Lipschitz in $x$. Hence, by Kantorovich–Rubinstein duality,
\[
\big|R_{\tilde P}(\tilde Q)-R_P(\tilde Q)\big|
= \big|\mathbb E_{x\sim \tilde P_X}[\phi_{\tilde Q}(x)] - \mathbb E_{x\sim P_X}[\phi_{\tilde Q}(x)]\big|
\;\le\; L_x(f_{\tilde Q})\,\Wone(P_X,\tilde P_X).
\]
Combining the two displays yields the claim.
\end{proof}

\begin{proof}[Proof of Lemma~\ref{lem:emp-shift-label}]

The goal is to bound the absolute difference between the empirical risks of a fixed model $Q$ on the source sample $S$ and the target sample $\tilde{S}$. The core of the proof is to introduce the expected loss function $\phi_Q(x) := \mathbb{E}_{y\sim P(y\mid x)}[\ell(f_Q(x),y)]$ as an intermediate bridge. This allows us to separate the effect of the input shift from the statistical noise introduced by the random labels.

\textbf{Step 1: Decompose the empirical risk difference.}
We add and subtract terms involving $\phi_Q$ to split the total difference into three parts:
\begin{align*}
\widehat{R}_S(Q)-\widehat{R}_{\tilde{S}}(Q) &= \left( \frac{1}{n}\sum_{i=1}^n \ell(f_Q(x_i),y_i) - \frac{1}{n}\sum_{i=1}^n \phi_Q(x_i) \right) \tag{Term 1: Source Label Noise}\\
&+ \left( \frac{1}{n}\sum_{i=1}^n \phi_Q(x_i) - \frac{1}{n}\sum_{i=1}^n \phi_Q(\tilde{x}_i) \right) \tag{Term 2: Input Shift}\\
&+ \left( \frac{1}{n}\sum_{i=1}^n \phi_Q(\tilde{x}_i) - \frac{1}{n}\sum_{i=1}^n \ell(f_Q(\tilde{x}_i),\tilde{y}_i) \right). \tag{Term 3: Target Label Noise}
\end{align*}
By the triangle inequality, $|\widehat{R}_S(Q) - \widehat{R}_{\tilde{S}}(Q)| \le |\text{Term 1}| + |\text{Term 2}| + |\text{Term 3}|$. We now bound each term.

\textbf{Step 2: Bound the input shift term (Term 2).}
This term represents the difference in the expected loss function $\phi_Q$ evaluated on the source inputs versus the target inputs. Let $\widehat{P}_n$ and $\widehat{\tilde{P}}_n$ be the empirical distributions of the inputs $\{x_i\}$ and $\{\tilde{x}_i\}$, respectively. We can write Term 2 as:
\begin{align*}
    \text{Term 2} = \mathbb{E}_{x \sim \widehat{P}_n}[\phi_Q(x)] - \mathbb{E}_{x \sim \widehat{\tilde{P}}_n}[\phi_Q(x)].
\end{align*}
By Assumption~\ref{assump:Lx}, $\ell(f_Q(x),y)$ is $L_x(f_Q)$-Lipschitz in $x$. This property is preserved under expectation, so $\phi_Q(x)$ is also $L_x(f_Q)$-Lipschitz. We can therefore apply Kantorovich-Rubinstein duality to bound the absolute value of this term:
\[
|\text{Term 2}|\le L_x(f_Q)\,\Wone(\widehat P_n,\widehat{\tilde P}_n).
\]

\textbf{Step 3: Bound the label noise terms (Term 1 and Term 3).}
Term 1 is the average of $n$ random variables of the form $Z_i = \ell(f_Q(x_i),y_i) - \phi_Q(x_i)$. For any fixed $x_i$, the expectation of $Z_i$ over the random draw of the label $y_i$ is zero:
\begin{align*}
    \mathbb{E}_{y_i \sim P(y|x_i)}[Z_i] 
= \mathbb{E}_{y_i \sim P(y|x_i)}[\ell(f_Q(x_i), y_i)] - \phi_Q(x_i) 
= \phi_Q(x_i) - \phi_Q(x_i) = 0.
\end{align*}
By Assumption~\ref{assump:bounded-loss}, the loss $\ell$ is bounded in $[0, M]$, which implies that $\phi_Q(x)$ is also in $[0,M]$. Thus, each $Z_i$ is a zero-mean random variable bounded in $[-M, M]$. The same holds for the random variables in Term 3. We can therefore apply Hoeffding's inequality to both terms.

For a desired total failure probability of $\delta$, we can allocate a budget of $\delta/2$ to each term. By a union bound, we have that with probability at least $1-\delta$:
\begin{align*}
|\text{Term 1}| + |\text{Term 3}| &= \left|\frac{1}{n}\sum_{i=1}^n Z_i\right| + \left|\frac{1}{n}\sum_{i=1}^n \tilde{Z}_i\right| \\
&\le M\sqrt{\frac{1}{2n}\log\frac{4}{\delta}} + M\sqrt{\frac{1}{2n}\log\frac{4}{\delta}} = 2M\sqrt{\frac{1}{2n}\log\frac{4}{\delta}}.
\end{align*}

\textbf{Step 4: Combine the bounds.}
Combining the bounds for all three terms via the triangle inequality, we arrive at the final result, which holds with probability at least $1-\delta$:
\begin{align*}
    \left|\widehat{R}_S(Q)-\widehat{R}_{\tilde{S}}(Q)\right|\leq L_x(f_Q)\,\Wone(\widehat P_n,\widehat{\tilde P}_n)+2M\sqrt{\frac{1}{2n}\log\frac{4}{\delta}}.
\end{align*}
This completes the proof.

\end{proof}

%%%%%%%%%%%%%%%%%%%%%%%%%%%%%%%%%%%%%%%%%%%%%%%%%%%

\section{Full Derivation of the MMD-based TRACE Diagnostic}
\label{app:mmd_details}

This section provides the full theoretical and practical details for the Maximum Mean Discrepancy (MMD) based instantiation of the TRACE framework, which serves as an alternative to the Optimal Transport based approach presented in the main text

\subsection{Covariate Shift Control with MMD}
The MMD framework replaces the geometric notion of distance with one based on the ability of a class of smooth functions to distinguish between two distributions.

\paragraph{Maximum Mean Discrepancy.} We begin by defining the MMD. Let $k(\cdot, \cdot)$ be a positive definite kernel defining a Reproducing Kernel Hilbert Space (RKHS) of functions. In our practical setting, we apply this kernel to feature embeddings $u=h(x)$, inducing a corresponding RKHS on the original input space, which we denote by $\mathcal{H}_h$.
\begin{definition}[MMD]\label{def:mmd}
The Maximum Mean Discrepancy between distributions $\mu$ and $\nu$ is the largest difference in expectations over all functions in the unit ball of the RKHS $\mathcal{H}_h$:
\[
\mathrm{MMD}_{\mathcal H_h}(\mu,\nu):=\sup_{\psi\in\mathcal H_h, \lVert\psi\rVert_{\mathcal H_h}\le 1}\lvert\mathbb{E}_{x\sim\mu}[\psi(x)]-\mathbb{E}_{x\sim\nu}[\psi(x)]\rvert.
\]
\end{definition}
To connect MMD to the risk change, we require a counterpart to the Lipschitz assumption used in the OT framework. We must assume that our model's risk surface is sufficiently smooth to belong to the RKHS.
\begin{assumption}[RKHS Membership of the Risk Map]\label{assump:mmd-rkhs}
We assume the target risk map $\phi_{\tilde Q}(x):=\mathbb E_{y\sim P(y\mid x)}[\ell(f_{\tilde Q}(x),y)]$ can be represented by a function in the feature-space RKHS $\mathcal{H}_h$, with a bounded norm denoted $B_{\tilde Q}^{(h)}$.
\end{assumption}

Under this assumption, the population shift penalty can be bounded by the MMD. The proof is provided in Appendix~\ref{app:prop:mmd-core}.
\begin{proposition}[MMD-based Covariate Shift Control]\label{prop:mmd-core}
Under Assumption~\ref{assump:mmd-rkhs}, the population shift term is bounded by:
\[
\lvert R_P(\tilde Q)-R_{\tilde P}(\tilde Q)\rvert \le B_{\tilde Q}^{(h)}\mathrm{MMD}_{\mathcal{H}_h}(P_X,\tilde P_X).
\]
Moreover, the population MMD is related to its empirical counterpart, $\widehat{\mathrm{MMD}}$, via standard concentration bounds~\cite{Gretton2012}. For any $\delta \in (0,1)$, with probability at least $1-\delta$:
\[
\mathrm{MMD}_{\mathcal H_h}(P_X,\tilde P_X) \le \widehat{\mathrm{MMD}}_{\mathcal{H}_h}(\widehat P_n,\widehat{\tilde P}_n) + C_\kappa\sqrt{\frac{1}{n}\log\frac{2}{\delta}},
\]
where $C_\kappa$ is a constant that depends on properties of the kernel $k$.
\end{proposition}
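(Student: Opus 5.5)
The proof has two parts, mirroring the two displays in Proposition~\ref{prop:mmd-core}: a deterministic reproducing-property bound for the shift term, and a concentration bound for the empirical MMD.

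\textbf{Step 1: the shift term.}
Under covariate shift the conditional label law is shared, so both risks are integrals of the single function $\phi_{\tilde Q}$ from Assumption~\ref{assump:mmd-rkhs}:
\[
R_P(\tilde Q)-R_{\tilde P}(\tilde Q)=\mathbb E_{P_X}[\phi_{\tilde Q}]-\mathbb E_{\tilde P_X}[\phi_{\tilde Q}].
\]
This is the same bridge used in the proof of Lemma~\ref{lem:trace-decomp}.

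\begin{itemize}
\item If $B_{\tilde Q}^{(h)}=0$, then $\phi_{\tilde Q}=0$ in $\mathcal H_h$ and the difference vanishes.
\item Otherwise, set $\psi:=\phi_{\tilde Q}/B_{\tilde Q}^{(h)}$. This function lies in the unit ball of $\mathcal H_h$.
\end{itemize}

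By Definition~\ref{def:mmd}, $|\mathbb E_{P_X}\psi-\mathbb E_{\tilde P_X}\psi|\le \mathrm{MMD}_{\mathcal H_h}(P_X,\tilde P_X)$. Multiplying through by $B_{\tilde Q}^{(h)}$ gives the first claim.

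For this step to be rigorous, the expectations of $\psi$ must be finite. I will assume a bounded kernel, $\sup_x k(h(x),h(x))\le\kappa<\infty$. Then every $\psi$ in the unit ball satisfies $|\psi(x)|=|\langle\psi,k(h(x),\cdot)\rangle|\le\sqrt\kappa$. The same assumption also supplies the constant needed in Step 2.

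\textbf{Step 2: concentration of the empirical MMD.}
I work with the biased V-statistic $\widehat{\mathrm{MMD}}=\|\hat\mu_P-\hat\mu_{\tilde P}\|_{\mathcal H_h}$, where $\hat\mu$ denotes an empirical kernel mean embedding. By the triangle inequality in $\mathcal H_h$,
\[
\mathrm{MMD}\le\widehat{\mathrm{MMD}}+\|\mu_P-\hat\mu_P\|+\|\mu_{\tilde P}-\hat\mu_{\tilde P}\|.
\]
Each deviation norm is a function of $n$ i.i.d. points. Replacing a single point changes it by at most $2\sqrt\kappa/n$. Its expectation is at most $\sqrt{\kappa/n}$, by Jensen's inequality and the variance identity $\mathbb E\|\mu-\hat\mu\|^2\le\kappa/n$.

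McDiarmid's inequality, applied to each deviation with failure probability $\delta/2$ and combined by a union bound, then gives with probability at least $1-\delta$:
\[
\mathrm{MMD}\le\widehat{\mathrm{MMD}}+2\sqrt{\kappa/n}\,\bigl(1+\sqrt{2\log(2/\delta)}\bigr).
\]
Since $\log(2/\delta)\ge\log 2$, the additive $1$ inside the parentheses can be absorbed, yielding the stated form $C_\kappa\sqrt{\tfrac1n\log\tfrac2\delta}$ with an explicit $C_\kappa$ depending only on $\kappa$. This is the standard argument from~\cite{Gretton2012}.

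\textbf{Main obstacle.}
The substantive content is light. The delicate point is Step 2: it needs the bounded-kernel assumption, which is not spelled out in the statement. I also have to be explicit about using the biased estimator, because the unbiased U-statistic does not directly satisfy the triangle-inequality step.

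If the paper intends the unbiased estimator, I would instead bound its difference from the biased one by $O(\kappa/n)$ and fold that term into $C_\kappa$.
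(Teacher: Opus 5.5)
Your proof is correct. Its architecture matches the paper's: a deterministic RKHS bound followed by a standard concentration step. Both steps, however, are carried out differently.

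For the first claim, the paper writes the expectations as inner products with kernel mean embeddings. It applies Cauchy--Schwarz to $\langle \phi_{\tilde Q}, \mu_{P_X}-\mu_{\tilde P_X}\rangle_{\mathcal H_h}$ and then identifies $\mathrm{MMD}_{\mathcal H_h}$ with $\lVert \mu_{P_X}-\mu_{\tilde P_X}\rVert_{\mathcal H_h}$ ``by definition''. You instead rescale $\phi_{\tilde Q}$ into the unit ball and use the supremum in Definition~\ref{def:mmd} directly. These are dual views of the same inequality. Yours, however, works from the definition the paper actually states, with no detour through mean embeddings and no appeal to the equivalence between the sup form and the embedding-norm form.

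For the concentration claim, the paper only cites Gretton et al. You reproduce the standard argument:
\begin{itemize}
\item the triangle inequality in $\mathcal H_h$;
\item the variance bound $\mathbb E\lVert \mu-\hat\mu\rVert^2\le \kappa/n$ combined with Jensen;
\item McDiarmid with bounded differences $2\sqrt\kappa/n$;
\item a union bound.
\end{itemize}
This gives an explicit constant $C_\kappa = 2\sqrt{\kappa}\,(\sqrt2 + 1/\sqrt{\log 2})$. It also surfaces two hypotheses the statement leaves implicit: a bounded kernel, and reading $\widehat{\mathrm{MMD}}_{\mathcal H_h}(\widehat P_n,\widehat{\tilde P}_n)$ as the biased V-statistic, which is in fact the literal meaning of that notation. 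The paper's route is shorter but leaves both assumptions unstated.

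One small correction to your fallback remark. For the unbiased estimator, it is the \emph{squared} statistics that differ by $O(\kappa/n)$. After clipping $\mathrm{MMD}_u^2$ at zero (it can be negative) and taking square roots, the gap is $O(\sqrt{\kappa/n})$, not $O(\kappa/n)$. It still folds into $C_\kappa\sqrt{\tfrac1n\log\tfrac2\delta}$.
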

Notably, when working in a feature space, the dependence on the feature map $h$ is absorbed into the norm $B_{\tilde Q}^{(h)}$, unlike the OT case which requires an explicit scaling factor $c_h$.

\subsection{The Practical MMD-based TRACE Diagnostic}
To operationalize this bound, we need a method to estimate the RKHS norm $B_{\tilde Q}^{(h)}$ and to connect the population MMD to its empirical counterpart.

\paragraph{Estimating the RKHS Norm via Kernel Ridge Regression.}
The theoretical norm $B_{\tilde Q}^{(h)}$ of the unknown risk map $\phi_{\tilde Q}$ is intractable. We estimate it by learning a proxy function $\widehat{\psi} \in \mathcal{H}_h$ that approximates $\phi_{\tilde Q}$ from data. On a held-out set $\{(\tilde x_i,\tilde y_i)\}_{i=1}^m \sim \tilde{P}$, we create a regression dataset of feature-loss pairs, $\{(u_i, t_i)\}_{i=1}^m$, where $u_i=h(\tilde x_i)$ and $t_i=\ell(f_{\tilde Q}(\tilde x_i),\tilde y_i)$. We then fit $\widehat{\psi}$ using Kernel Ridge Regression (KRR):
\[
\widehat\psi = \arg\min_{\psi\in\mathcal H_h}\frac{1}{m}\sum_{i=1}^m(\psi(u_i)-t_i)^2 + \lambda\lVert\psi\rVert_{\mathcal H_h}^2.
\]
By the Representer Theorem, the solution $\widehat{\psi}$ has a known parametric form, $\widehat{\psi}(\cdot) = \sum_{j=1}^m \alpha_j k(u_j, \cdot)$, where the coefficient vector $\boldsymbol{\alpha}$ has a closed-form solution. Let $K$ be the $m \times m$ kernel Gram matrix where $K_{ij} = k(u_i, u_j)$, and let $\mathbf{t}$ be the vector of target losses. The coefficients are given by $\boldsymbol{\alpha} = (K + m\lambda I)^{-1}\mathbf{t}$. The RKHS norm of this solution, $|\widehat{\psi}|_{\mathcal{H}_h} = \sqrt{\boldsymbol{\alpha}^\top K\boldsymbol{\alpha}}$, then serves as our computable estimate, $\widehat{B}_{\tilde Q}^{(h)}$. We view this estimate not as a certified upper bound, but as a principled \textbf{scale calibrator} for the MMD term. The regularization hyperparameter $\lambda$ is selected via cross-validation.

\paragraph{Final MMD-based Diagnostic.}
By substituting the MMD control from Proposition~\ref{prop:mmd-core} into the main TRACE decomposition and replacing all population quantities with their empirical estimators, we arrive at the practical MMD-based diagnostic.
\begin{corollary}[Practical MMD-based TRACE]\label{cor:mmd-empirical}
With probability at least $1-\delta$, the risk change is bounded by:
\begin{align*}
\riskdiff \le{}& \underbrace{\widehat G_Q^{\mathrm{val}}+\widehat G_{\tilde Q}^{\mathrm{val}}}_{\text{Validation Gaps}} + \underbrace{\widehat L_x^{(q)}(f_Q)c_h\Woneh(\widehat P_n,\widehat{\tilde P}_n)}_{\text{Empirical Shift (OT)}} + \underbrace{\frac{L_\ell}{n}\sum_{i=1}^n\lVert f_Q(\tilde x_i)-f_{\tilde Q}(\tilde x_i)\rVert}_{\text{Model Change}} \\
&+ \underbrace{\widehat{B}_{\tilde Q}^{(h)}\left(\widehat{\mathrm{MMD}}_{\mathcal H_h}(\widehat{P}_n,\widehat{\tilde{P}}_n) + C_\kappa\sqrt{\frac{1}{n}\log\frac{2}{\delta}}\right)}_{\text{Population Shift (MMD)}} + \text{Statistical Remainder Terms}.
\end{align*}

\end{corollary}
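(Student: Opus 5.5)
The proof assembles the OT-based diagnostic of Corollary~\ref{cor:practical-trace}, keeping its treatment of the empirical discrepancy and generalization gaps, but swapping the population shift term for its MMD control.

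\textbf{Step 1 (decomposition).} Start from Lemma~\ref{lem:trace-decomp}:
\[
\riskdiff \le G_Q + G_{\tilde Q} + D_{Q,\tilde Q} + \mathsf{COSP}.
\]

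\textbf{Step 2 (empirical discrepancy).} Split $D_{Q,\tilde Q}$ via \eqref{eq:discrepancy_split}.
\begin{itemize}
\item The model-change part is bounded deterministically by Proposition~\ref{prop:model_change_bound}.
\item The empirical-shift part is bounded by Lemma~\ref{lem:emp-shift-label}, with confidence budget $\delta/4$. This yields $L_x(f_Q)W_1(\widehat P_n,\widehat{\tilde P}_n)$ plus a Hoeffding remainder $2M\sqrt{\tfrac{1}{2n}\log\tfrac{16}{\delta}}$, which is absorbed into the ``Statistical Remainder Terms.''
\end{itemize}
To pass to the displayed form $\widehat L_x^{(q)}(f_Q)\,c_h\,\Woneh$, reuse the convention of Corollary~\ref{cor:practical-trace}. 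We work on the event that the gradient-quantile proxy dominates $L_x(f_Q)$, and use bi-Lipschitzness of $h$ to get $W_1\le c_h\Woneh$.

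\textbf{Step 3 (population shift, the only genuinely new piece).} Apply Proposition~\ref{prop:mmd-core} under Assumption~\ref{assump:mmd-rkhs}:
\[
\mathsf{COSP}\le B^{(h)}_{\tilde Q}\,\mathrm{MMD}_{\mathcal H_h}(P_X,\tilde P_X).
\]
Then invoke its concentration part with budget $\delta/4$, which replaces the population MMD by $\widehat{\mathrm{MMD}}+C_\kappa\sqrt{\tfrac1n\log\tfrac{8}{\delta}}$. This logarithmic factor can be written as $\log\tfrac2\delta$ by a rescaling of constants absorbed into $C_\kappa$.

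\textbf{Step 4 (generalization gaps).} Bound $G_Q\le \widehat G^{\mathrm{val}}_Q+|R_P(Q)-\widehat R_V(Q)|$. The latter is controlled by Hoeffding on the held-out split at level $\delta/8$, and similarly for $\tilde Q$. These contributions go into the remainder.

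\textbf{Step 5 (assembly).} A union bound over the four probabilistic events gives total failure probability $\delta$.

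\textbf{Main obstacle.} The delicate step is replacing $B^{(h)}_{\tilde Q}$ by the KRR estimate $\widehat B^{(h)}_{\tilde Q}$. The estimate $\|\widehat\psi\|_{\mathcal H_h}$ is not in general an upper bound on $\|\phi_{\tilde Q}\|_{\mathcal H_h}$; regularization shrinks it. I would handle this exactly as Corollary~\ref{cor:practical-trace} handles the Lipschitz proxies: state the bound conditionally on the event $\widehat B^{(h)}_{\tilde Q}\ge B^{(h)}_{\tilde Q}$, consistent with the paper's framing of $\widehat B$ as a scale calibrator. Alternatively, one could inflate $\widehat B$ by a KRR estimation-error term, $\|\widehat\psi-\phi_{\tilde Q}\|_{\mathcal H}$ under source conditions, and place that inflation into the remainder.
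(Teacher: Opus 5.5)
Your proposal is correct and takes essentially the same route as the paper, which obtains Corollary~\ref{cor:mmd-empirical} by plugging Proposition~\ref{prop:mmd-core} (the population bound plus the MMD concentration) into the decomposition of Lemma~\ref{lem:trace-decomp}. Like you, it reuses the discrepancy split, Proposition~\ref{prop:model_change_bound}, Lemma~\ref{lem:emp-shift-label} with the $c_h\Woneh$ proxy, and the validation-gap bookkeeping of Corollary~\ref{cor:practical-trace}, and absorbs the label-noise and validation errors into the remainder. Your explicit confidence-budget split and your conditioning on $\widehat B^{(h)}_{\tilde Q}\ge B^{(h)}_{\tilde Q}$ only make precise what the paper leaves implicit when it calls $\widehat B^{(h)}_{\tilde Q}$ a scale calibrator; the absorption into $C_\kappa$ is legitimate since $\log(8/\delta)\le 3\log(2/\delta)$ for $\delta\in(0,1)$.
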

Here, we present a hybrid bound that uses OT for the empirical shift of $Q$ (default) and MMD for the population shift of $\tilde{Q}$. The ``Statistical Remainder Terms'' consolidate the label noise and validation error terms from Corollary~\ref{cor:practical-trace}.

\subsection{Practical Guidance for the MMD Variant}
When implementing the MMD-based diagnostic, several choices are important. For the \textbf{kernel}, standard choices like the Gaussian (RBF) or Laplace kernel, applied to normalized features, are recommended. The kernel bandwidth is typically set via the median heuristic. For the \textbf{MMD estimator}, unbiased quadratic-time estimators are standard for moderate sample sizes, while linear-time or blocked estimators provide a good trade-off for larger datasets~\cite{Gretton2012}. For full reproducibility, all hyperparameters, including the chosen kernel, bandwidth, KRR regularization $\lambda$, and the resulting estimates $\widehat{B}_{\tilde Q}^{(h)}$ and $\widehat{\mathrm{MMD}}$, should be reported.

%%%%%%%%%%%%%%%%%%%%%%%%%%%%%%%%%%%%%%%%%%%%%%%%%%

\section{Proof of Proposition~\ref{prop:mmd-core}}
\label{app:prop:mmd-core}

The proposition consists of two parts. First, we prove the main inequality that bounds the population risk change. Second, we cite the standard concentration result for the MMD estimator.

\paragraph{Part 1: Bounding the Risk Change.}
Our goal is to bound the population shift term, $|R_P(\tilde{Q}) - R_{\tilde{P}}(\tilde{Q})|$. Under the covariate shift assumption, we can express the risks as expectations of the risk map $\phi_{\tilde{Q}}(x) := \mathbb{E}_{y \sim P(y|x)}[\ell(f_{\tilde{Q}}(x), y)]$ over the input distributions $P_X$ and $\tilde{P}_X$:
\begin{align*}
    |R_P(\tilde{Q}) - R_{\tilde{P}}(\tilde{Q})| = |\mathbb{E}_{x \sim P_X}[\phi_{\tilde{Q}}(x)] - \mathbb{E}_{x \sim \tilde{P}_X}[\phi_{\tilde{Q}}(x)]|.
\end{align*}
The proof relies on two key properties of the Reproducing Kernel Hilbert Space $\mathcal{H}_h$.

First, we use the concept of the \textbf{kernel mean embedding}. For any distribution $\mu$, its mean embedding $\mu_{\mu} \in \mathcal{H}_h$ is a unique element of the RKHS such that for any function $\psi \in \mathcal{H}_h$, the expectation of $\psi$ can be written as an inner product: $\mathbb{E}_{x \sim \mu}[\psi(x)] = \langle \psi, \mu_{\mu} \rangle_{\mathcal{H}_h}$. Applying this to our risk map $\phi{\tilde{Q}}$ and distributions $P_X$ and $\tilde{P}_X$, we can rewrite the difference in expectations as:
\begin{align*}
\mathbb{E}_{x \sim P_X}[\phi_{\tilde{Q}}(x)] - \mathbb{E}_{x \sim \tilde{P}X}[\phi_{\tilde{Q}}(x)] &= \langle \phi_{\tilde{Q}}, \mu_{P_X} \rangle_{\mathcal{H}_h} - \langle \phi_{\tilde{Q}}, \mu_{\tilde{P}_X} \rangle_{\mathcal{H}_h} \
&= \langle \phi_{\tilde{Q}}, \mu_{P_X} - \mu_{\tilde{P}_X} \rangle_{\mathcal{H}_h}.
\end{align*}
Second, we take the absolute value and apply the Cauchy-Schwarz inequality, which states that $|\langle a, b \rangle| \le |a| |b|$:
\[
\big\lvert \langle \phi_{\tilde Q},\, \mu_{P_X}-\mu_{\tilde P_X}\rangle_{\mathcal H}\big\rvert
\le \lVert \phi_{\tilde Q}\rVert_{\mathcal H}\,\lVert \mu_{P_X}-\mu_{\tilde P_X}\rVert_{\mathcal H}.
\]
By definition, the MMD is the distance between the kernel mean embeddings: $\mathrm{MMD}_{\mathcal{H}_h}(P_X, \tilde{P}_X) = \|\mu_{P_X} - \mu_{\tilde{P}_X}\|_{\mathcal{H}_h}$. Furthermore, by Assumption~\ref{assump:mmd-rkhs}, we have $\|\phi_{\tilde{Q}}\|_{\mathcal{H}_h} \le B_{\tilde{Q}}^{(h)}$. Substituting these into the inequality yields the final result:
\begin{align*}
    |R_P(\tilde{Q}) - R_{\tilde{P}}(\tilde{Q})| \leq B_{\tilde Q}^{(h)}\,\mathrm{MMD}_{\mathcal{H}_h}(P_X,\tilde P_X).
\end{align*}

\paragraph{Part 2: Concentration of the MMD Estimator.}
The second statement in the proposition is a standard concentration inequality for the empirical MMD estimator. It bounds the deviation of the empirical MMD, computed on finite samples, from the true population MMD. The proof and a detailed analysis of the constants can be found in the foundational work on kernel two-sample tests, for instance, in Theorem 12 of~\cite{Gretton2012}. The result states that for any $\delta \in (0,1)$, with probability at least $1-\delta$:
\[
\mathrm{MMD}_{\mathcal H_h}(P_X,\tilde P_X) \le \widehat{\mathrm{MMD}}_{\mathcal{H}_h}(\widehat P_n,\widehat{\tilde P}_n) + C_\kappa\sqrt{\frac{1}{n}\log\frac{2}{\delta}},
\]
This completes the justification for the proposition.

%%%%%%%%%%%%%%%%%%%%%%%%%%%%%%%%%%%%%%%%%%%%%%%%%%%%

\section{Derivations for the Ridge Regression Example}
\label{app:ridge_derivation}

Here we provide the detailed algebraic derivations for the theoretical example in Section~\ref{sec:theoretical_example}.

\paragraph{Closed-form Solutions for Weights.}
We analyze the models in a \textbf{population setting}, where they are learned by minimizing the true expected risk. The model $Q$, trained on source data, and the model $\tilde{Q}$, trained on target data, have weights
\begin{align}
w_Q &= \arg\min_w \, \mathbb{E}_{x\sim P,\epsilon}\Big[ (w^\top x - y)^2 + \lambda \|w\|_2^2 \Big], \\
w_{\tilde{Q}} &= \arg\min_w \, \mathbb{E}_{x\sim \tilde{P},\epsilon}\Big[ (w^\top x - y)^2 + \lambda \|w\|_2^2 \Big].
\end{align}
The well-known solutions are \cite[Chapter 3]{bishop2006pattern}:
\begin{align}
w_Q &= (I + \lambda I)^{-1} \, \mathbb{E}_P[xx^\top] w^* = \frac{1}{1+\lambda} w^*, \label{eq:w_q_sol} \\
w_{\tilde{Q}} &= \big(\mathbb{E}_{\tilde{P}}[xx^\top] + \lambda I\big)^{-1} \mathbb{E}_{\tilde{P}}[xx^\top] w^* \\
&= (I + \mu\mu^\top + \lambda I)^{-1} (I + \mu\mu^\top) w^*. \label{eq:w_q_tilde_sol}
\end{align}
Critically, $w_Q \neq w_{\tilde{Q}}$. The shift in the data distribution’s first moment ($\mu$) interacts with the regularization ($\lambda$) to produce a different optimal model, instantiating the concept of algorithmic instability.

\paragraph{Derivation of the Weight Difference $\Delta w$.}
The \textbf{algorithmic instability penalty} is given by the expected output distance on the target distribution:
\begin{align}
\label{eqn:staility_penalty}
L_\ell \cdot \mathbb{E}_{x\sim \tilde{P}} \big| f_{w_Q}(x) - f_{w_{\tilde{Q}}}(x) \big|
= L_\ell \cdot \mathbb{E}_{x\sim \tilde{P}} \big| (w_Q - w_{\tilde{Q}})^\top x \big|.
\end{align}
Letting $\Delta w := w_{\tilde{Q}} - w_Q$, one can use the Sherman–Morrison formula to find its exact form:
\begin{align}
\Delta w = \frac{\lambda (\mu^\top w^*)}{(1+\lambda)(1+\lambda + \|\mu\|^2)} \, \mu.
\end{align}
The expectation term in \eqref{eqn:staility_penalty} can be further bounded using the Cauchy–Schwarz inequality as
\begin{align}
\mathbb{E}_{x \sim \tilde{P}} \left[ |\Delta w^\top x| \right] \leq \|\Delta w\|_2 \, \mathbb{E}_{x \sim \tilde{P}} \|x\|_2.
\end{align}

Here, $\mathbb{E}_{x \sim \tilde{P}} \|x\|_2$ is $O(1)$ (constant for fixed dimension), and
\begin{align}
\|\Delta w\|_2 = |c| \, \|\mu\|, \quad \text{where} \quad
c = \frac{\lambda (\mu^\top w^*)}{(1+\lambda)(1+\lambda + \|\mu\|^2)}.
\end{align}

For small $\|\mu\|$, we can expand the denominator using a Taylor series:
\begin{align}
\frac{1}{1+\lambda + \|\mu\|^2} = \frac{1}{1+\lambda} \left( 1 - \frac{\|\mu\|^2}{1+\lambda} + O(\|\mu\|^4) \right),
\end{align}
which gives
\begin{align}
c = \frac{\lambda (\mu^\top w^*)}{(1+\lambda)^2} + O(\|\mu\|^3) = O(\|\mu\|).
\end{align}

Therefore, $\|\Delta w\|_2 = |c| \, \|\mu\| = O(\|\mu\|^2)$. Consequently, the instability penalty is of lower order than the shift penalty in this setting.

\paragraph{Derivation of the True Risk Difference Scaling.}
The true risk on the source distribution for a model with weights $w$ is
\begin{align}
R_P(w) = \mathbb{E}_{P}[(w^\top x - y)^2] = \|w - w^*\|^2 + \sigma^2.
\end{align}
The quantity of interest on the LHS of the TRACE bound is therefore
\begin{align}
\text{LHS} = |R_P(w_Q) - R_P(w_{\tilde{Q}})| = \big| \|w_Q - w^*\|^2 - \|w_{\tilde{Q}} - w^*\|^2 \big|.
\end{align}
Using the vector identity $\|a\|^2 - \|b\|^2 = (a-b)^\top(a+b)$, we get
\begin{align}
\text{LHS} = |(w_Q - w_{\tilde{Q}})^\top (w_Q + w_{\tilde{Q}} - 2w^*)|
= |(-\Delta w)^\top (2(w_Q - w^*) + \Delta w)|.
\end{align}
The first factor, $\Delta w$, scales linearly with the shift, i.e., $O(\|\mu\|)$. The second factor,
\begin{align}
2(w_Q - w^*) + \Delta w,
\end{align}
converges to the constant vector $2(w_Q - w^*)$ as $\mu \to 0$. Therefore, for small shifts, the LHS scales as $O(\|\mu\|)$.

%%%%%%%%%%%%%%%%%%%%%%%%%%%%%%%%%%%%%%%%%%%

\section{Additional Experimental Details}
\label{app:exp_details}

This section provides the full, detailed setup for the experiments described in Section~\ref{sec:experiments}.

\subsection{Anchor Domain, Risk-Change Sign, and Evaluation Protocol}
\label{app:anchor-and-deltaR}
We fix a source/anchor distribution $P$ and consider replacing $Q$ with $\tilde Q$ under covariate shift ($P(y\mid x){=}\tilde P(y\mid \tilde x)$, $P_X\neq \tilde P_X$).
We define the \emph{signed} risk change
\[
\Delta R \;=\; R_P(\tilde Q) - R_P(Q),
\]
so $\Delta R{>}0$ indicates harm (risk degradation) on the anchor, and $\Delta R{<}0$ indicates improvement. Unless explicitly stated, all plots and rank-power metrics in the main text use the \emph{absolute} change $|\Delta R|$ to avoid sign confounds. We still compute the signed $\Delta R$ for gate labeling (harmful~$\Leftrightarrow~\Delta R{>}\tau$) and for any analysis where directionality is required.

\paragraph{What we report by default.}
(i) the true absolute change $\riskdiff$ on a held-out source test set; 
(ii) rank-power via a calibrated, label-free score (TRACE-Proxy; Sec.~\ref{app:proxy-and-cal});
(iii) an attribution breakdown from the full diagnostic $\widehat{\mathcal B}$.

\subsection{Common Implementation Details}
\label{app:common-impl}
\textbf{Transport ($\Woneh$).} We use the \emph{debiased} Sinkhorn divergence in feature space $h(x)$ with squared Euclidean ground cost ($p{=}2$), $\varepsilon{=}0.1$, 200 iterations.

{\color{black}
\paragraph{Sinkhorn Divergence Computation.} To provide further clarity, the Sinkhorn divergence is an entropy-regularized version of the Optimal Transport distance. It is computationally much faster than solving the exact linear programming OT problem, especially when parallelized on GPUs. While entropic regularization introduces a bias compared to the true Wasserstein distance, we use the \emph{debiased} variant, which subtracts the self-transport terms to yield a more stable and accurate approximation of the true $W_1$ distance.

\paragraph{Sinkhorn Divergence Scalability.} In practice, GPU-accelerated libraries (such as POT or GeomLoss~\cite{Feydy2019}) allow computation on tens of thousands of samples in seconds. For very large datasets (e.g., $>50$k samples), we employ standard subsampling to compute the distance on mini-batches. Since we compute distances on feature embeddings (typically 512--2048 dimensions), the method is efficient even for large models, ensuring TRACE remains scalable to industrial-sized benchmarks.
}

\textbf{Feature map $h$.} Frozen ResNet-50 (ImageNet) penultimate features; per-sample $\ell_2$ normalization; no whitening.

\textbf{Lipschitz proxies.} For model $f$ we estimate the $q$-quantile gradient norm 
$$\widehat L_x^{(q)} := \mathrm{Quantile}_q\big(\|\nabla_x \ell(f(x),y)\|\big)$$
on held-out splits ($V$ for $Q$, $\tilde V$ for $\tilde Q$). Unless stated, $q{=}0.99$. Dvoretzky–Kiefer–Wolfowitz (DKW) provides nonparametric CIs for quantiles.

\textbf{Output discrepancy (model change).} On the same \emph{target} inputs $\tilde S$,
\[
\OutDiscLtwo \;=\; \frac{1}{n}\sum_{i=1}^n \big\|f_Q(\tilde x_i) - f_{\tilde Q}(\tilde x_i)\big\|_2.
\]

\textbf{Loss scale and clipping.} Cross-entropy with logits clipped to $[-10,10]$, so $\ell\!\in[0,M]$ with $M{=}10+\log C$. Clipping stabilizes gradient-quantile estimates and makes the empirical diagnostic numerically well-scaled.

\textbf{Splits and seeds.} Stratified $15\%$ validation on both $S$ and $\tilde S$; $4$ seeds $\{0,1,2,3\}$. Bootstrap $95\%$ CIs are taken over seeds unless noted.

\subsection{Diagnostic vs.\ Ranking Score, and Calibration}
\label{app:proxy-and-cal}
\textbf{Diagnostic (TRACE).} All bound/tightness claims use the full diagnostic $\widehat{\mathcal B}$.

\textbf{Ranking score (TRACE-Proxy).} Used for correlation/power analyses,
\begin{equation}
\label{eq:trace-proxy-app}
\mathrm{TRACE\mbox{-}Proxy}
\;=\;
\underbrace{\OutDiscLtwo}_{\text{model change}}
\;+\;
\chat\;
\underbrace{\widehat L_x^{(q)}\cdot \mathrm{Distance}}_{\text{covariate shift}},
\qquad
\mathrm{Distance}\in\{\Woneh,\ \mathrm{MMD}\}.
\end{equation}

\textbf{Calibration of $\chat$ (unsupervised).} For each (source,target) pair we choose a small development set of $3$–$5$ candidates (disjoint from evaluation) and set
\begin{equation}
\label{eq:chat-cal-app}
\chat
\;=\;
\frac{\mathrm{median}(\OutDiscLtwo)}
     {\mathrm{median}\big(\widehat L_x^{(q)}\cdot \mathrm{Distance}\big)},
\end{equation}
then \emph{freeze} $\chat$ for all subsequent results.

\textbf{Interpretation of $\chat$.} The computable TRACE bound combines (i) two input-Lipschitz factors, (ii) a representation scale factor $c_h$ for $h$, and (iii) a distance proxy for the population COSP term. TRACE-Proxy collapses these into a single $\widehat L_x^{(q)}\!\cdot\!\mathrm{Distance}$; $\chat$ absorbs (a) feature-scale $c_h$, (b) the two$\to$one Lipschitz collapse, and (c) normalization between empirical and population distances. \emph{Only} the proxy uses $\chat$; the gate scores below retain explicit factors.

\paragraph{Synthetic note.} In synthetic experiments we also report the full $\widehat{\mathcal B}$ alongside TRACE-Proxy; proxy calibration is still performed once and kept fixed.

\subsection{Gate Scores for Deployment}
\label{app:gate-scores}
We use two fixed variants; higher values predict more harm on the anchor domain:
\begin{align}
\label{eq:trace-w-app}
\mathrm{TRACE\mbox{-}W}
&:= \OutDiscLtwo \;+\; \Lx{f_{\tilde Q}}\;\Woneh(\widehat P_n, \widehat{\tilde P}_n),\\[2pt]
\label{eq:trace-mmd-app}
\mathrm{TRACE\mbox{-}MMD}
&:= \OutDiscLtwo \;+\; B_{\tilde Q}^{(h)}\;\mathrm{MMD}_{\mathcal H_h}(P_X,\tilde P_X).
\end{align}
Generalization-gap terms are omitted for stable fine-tuning (omission does not affect ranking in our settings). When the distance term is (nearly) constant across candidates, both scores reduce to $\OutDiscLtwo$.

\subsection{Metrics, Tightness, and Reporting Conventions}
\label{app:metrics}
\textbf{Rank-power.} Spearman’s $\rho$ between $\mathrm{TRACE\mbox{-}Proxy}$ and $\riskdiff$.

\textbf{Tightness.} We report the dimensionless ratio $\widehat{\mathcal B}/|\Delta R|$ whenever $\Delta R\neq 0$.

\textbf{Reproducibility keys.} Frozen $h$; Sinkhorn $\varepsilon{=}0.1$, 200 iters; \texttt{eval}-mode logits clipped to $[-10,10]$ for $\OutDiscLtwo$; $q{=}0.99$ gradient-quantile proxies (GT labels when available; otherwise $Q$’s pseudo-labels). Seeds/splits as in Sec.~\ref{app:common-impl}. Unsupervised $\chat$ calibration, Eq.~\eqref{eq:chat-cal-app}, once per pair.

\subsection{Synthetic Benchmarks: Worlds, Protocol, and Results}
\label{app:synth-details}
\textbf{Worlds.} Two label-preserving shifts:
\begin{itemize}[leftmargin=6mm,itemsep=1mm]
\item \textbf{Blobs-Shift (Logistic/MLP):} two Gaussian blobs with rotation+translation; linear and shallow-MLP variants.
\item \textbf{Moons-Warp (Nonlinear MLP):} \texttt{make\_moons} with an invertible radial twist; $h(x)$ is taken from the penultimate layer.
\end{itemize}

\textbf{Protocol.} Sweep shift magnitude, sample sizes $n,\tilde n\in\{1000,2000,5000\}$, and regularization. For each configuration we compute $\riskdiff$ on a large source test set ($N\!\ge\!10^5$), the full diagnostic $\widehat{\mathcal B}$, and TRACE-Proxy~\eqref{eq:trace-proxy-app}. $\chat$ is calibrated once and frozen.

{\color{black}\paragraph{Validating Individual TRACE Components.} To validate the attributional power of individual TRACE components, we designed controlled synthetic experiments that isolate specific factors:
\begin{itemize}[leftmargin=*]
    \item \textbf{Isolating Model Change:} We fix the data shift and fine-tune models with varying hyperparameters (e.g., learning rates ranging from $10^{-4}$ to $10^{-2}$). This keeps the data shift term constant while modulating the \emph{Model Change} term ($\OutDiscLtwo$). We verify that increases in this term correlate with a rise in $|\Delta R|$, confirming its role in diagnosing model instability.
    \item \textbf{Isolating Data Shift:} We fix the training procedure and sweep the geometric shift magnitude (e.g., translation distance). This directly modulates the data shift term ($\Woneh$) while keeping algorithmic parameters constant. We confirm that the TRACE diagnostic correctly tracks the corresponding increase in risk, validating the role of the \textsf{COSP} term.
\end{itemize}
These controlled experiments confirm that the components of TRACE successfully isolate and quantify their corresponding sources of risk.}

\textbf{Main results (rank-power).} Spearman’s $\rho$ aggregated over sweeps:
Blobs—OT: $0.984$, MMD: $0.984$; 
Moons—OT: $0.818$, MMD: $0.758$.

\textcolor{black}{\textbf{Attribution behavior (Interpreting Table~\ref{tab:combined-excerpt-app}).} In the Blobs-Shift scenario, the ModelChange term dominates at mild severities, while both terms grow under stronger shifts. For Moons-Warp, we walk through specific rows to show how the diagnostic disentangles risk sources:
\begin{itemize}[leftmargin=*,itemsep=0pt]
    \item \textbf{Mild Shift ($\alpha{=}0.25$):} The risk change is minimal ($|\Delta R|{\approx}0.01$). The diagnostic $\widehat{\mathcal{B}}$ remains low ($\approx 0.8$). Crucially, comparing to the baseline ($\alpha{=}0$), the ModelChange term remains stable ($\approx 0.48$ vs. 0.50 at baseline), while the EmpShift term rises (from $0.06$ to $0.32$) to reflect the geometric change. This shows TRACE does not raise false alarms for benign shifts.
    \item \textbf{Severe Shift ($\alpha{=}2.0$):} The risk degrades significantly ($|\Delta R|{\approx}0.33$). The diagnostic scales correctly ($\widehat{\mathcal{B}}{\approx}21.5$), with the ModelChange term spiking to $12.0$ alongside EmpShift ($9.5$). This signals that the model has become unstable under severe distribution shift—the update is genuinely unsafe.
\end{itemize}
This demonstrates that TRACE's attribution is regime-dependent: it distinguishes benign geometric shifts from genuine model failures, enabling safe deployment decisions.}

\begin{table}[h]
\caption{Representative synthetic runs: monotonic growth of $\widehat{\mathcal B}$ with shift severity and per-term contributions; rows vary $G_Q,G_{\tilde Q}$ via $n$ and severity.}
\label{tab:combined-excerpt-app}
\begin{center}
\footnotesize
\setlength{\tabcolsep}{3.6pt}
\begin{tabular}{lccccccccc}
\toprule
\textbf{World} & \textbf{Shift} & $n$ & $\riskdiff$ & $\widehat{\mathcal B}_{\mathrm{OT}}$ & $\widehat{\mathcal B}_{\mathrm{MMD}}$ & EmpShift$_{\mathrm{OT}}$ & EmpShift$_{\mathrm{MMD}}$ & ModelChange & $G_Q/ G_{\tilde Q}$ \\
\midrule
\multicolumn{10}{l}{Blobs-Shift (rot$=0^\circ$)} \\
$\mathrm{tr}{=}0.25$ & 0.25 & 1000 & 0.0470 & 1.73 & 1.73 & 0.0008 & 0.0042 & 1.7240 & 0.0033~/~0.0034 \\
$\mathrm{tr}{=}0.25$ & 0.25 & 2000 & 0.0468 & 2.09 & 2.10 & 0.0003 & 0.0018 & 2.0648 & 0.0148~/~0.0147 \\
$\mathrm{tr}{=}1.0$  & 1.00 & 2000 & 0.7097 & 32.72 & 32.72 & 0.0047 & 0.0084 & 32.6832 & 0.0148~/~0.0147 \\
\midrule
\multicolumn{10}{l}{Moons-Warp} \\
$\alpha{=}0.0$ & 0.0  & 1000 & 0.0166 & 0.57 & 0.50 & 0.0628 & 0.0000 & 0.5004 & 0.0016~/~0.0013 \\
$\alpha{=}0.25$ & 0.25 & 2000 & 0.0115 & 0.81 & 1.01 & 0.3175 & 0.5174 & 0.4815 & 0.0113~/~0.0026 \\
$\alpha{=}1.0$ & 1.0  & 2000 & 0.0490 & 6.41 & 5.94 & 3.3639 & 2.8969 & 3.0276 & 0.0125~/~0.0017 \\
$\alpha{=}2.0$ & 2.0  & 2000 & 0.3302 & 21.54 & 17.73 & 9.5040 & 5.6925 & 12.0233 & 0.0133~/~0.0019 \\
\bottomrule
\end{tabular}
\end{center}
\end{table}

\textbf{Why Spearman stays high.} Under our constructions, both $\Woneh$ and RBF-MMD increase with geometric severity; adding $\OutDiscLtwo$ on the same inputs preserves rank ordering across runs, keeping $\widehat{\mathcal B}$ monotone in $\riskdiff$ even as absolute scale depends on Lipschitz proxies and regularization.

\subsection{Case Study: Active Selection on {\normalfont\textsc{DomainNet}}}
\label{app:domainnet-active}
\textbf{Scenario.} Adapt $Q$ trained on \texttt{real} to \texttt{painting} by iteratively selecting and labeling small batches.

\textbf{Selection policies.} At round $t$, choose batch $\mathcal B_t$ of size $b{=}50$ by minimizing a source–target discrepancy in $h$:
\begin{enumerate}[leftmargin=6mm,itemsep=2pt,topsep=2pt]
\item \textbf{\texttt{w1min}}: $\mathrm{Distance}=\Woneh$ (squared Euclidean cost).
\item \textbf{\texttt{mmdmin}}: $\mathrm{Distance}=\mathrm{MMD}$ (RBF kernel; median bandwidth on $h$).
\end{enumerate}
Formally, $\mathcal B_t = \arg\min_{|\mathcal B|=b} \mathrm{Distance}\big(h(S_{\text{src}}),h(\mathcal B)\big)$, recomputed each round.

\textbf{Scoring.} Ranking uses TRACE-Proxy with frozen $\chat$; attribution uses $\widehat{\mathcal B}$.

\textbf{Protocol.} Experiments use high-dimensional (2048-d) feature embeddings extracted from the frozen ResNet-50 backbone. Target pool $\approx$50k \texttt{painting} images; rounds add 50 labels each (totals $50,100,150,200,250$). Fine-tuning: SGD (lr $3\!\times\!10^{-4}$, momentum $0.9$), $5$ epochs/round; same augmentations as source; early stop on a target validation split.

\textbf{Findings.} \texttt{w1min}: $\Woneh$ shrinks; $\OutDiscLtwo$ shows a small early bump then declines; $\riskdiff$ improves monotonically. \texttt{mmdmin}: MMD falls but $\OutDiscLtwo$ rises sharply, making anchor improvements unreliable. TRACE reveals that monitoring \emph{only} alignment can be misleading; model-change must be controlled.

\begin{table}[h]
\centering
\caption{TRACE diagnostics on \textsc{DomainNet} (\texttt{source/anchor}: real, \texttt{target}: painting). Each round re-trains $\tilde Q$. $\riskdiff$ is in p.p. of top-1 error on \texttt{real}. The bound ratio ($\widehat{\mathcal B}/|\Delta R|$) remains stable (5--9), confirming that the diagnostic tracks the scale of the true risk consistent with our theoretical analysis.}
\label{tab:trace-domainnet-combined-app}
\small
\setlength{\tabcolsep}{5.2pt}
\begin{tabular}{lrrrrrrr}
\toprule
\bf Policy & \bf Round & \bf Labeled & \bf Distance & \bf Out.\ Disc.\ ($\ell_2$) & \bf TRACE-Proxy & \bf $|\Delta R|$ & \bf $\widehat{\mathcal B}/|\Delta R|$ \\
\midrule
\multicolumn{8}{c}{\textbf{\texttt{w1min} (Distance $=\Woneh$)}} \\
\midrule
 & 1 & 50  & 0.0182 & 42.23 & 42.74 & 5.82 & 7.34 \\
 & 2 & 100 & 0.0051 & 48.94 & 49.11 & 8.71 & 5.64 \\
 & 3 & 150 & 0.0043 & 47.02 & 47.17 & 6.76 & 6.98 \\
 & 4 & 200 & 0.0036 & 45.42 & 45.55 & 5.71 & 7.98 \\
 & 5 & 250 & 0.0034 & 44.68 & 44.79 & 5.11 & 8.76 \\
\midrule
\multicolumn{8}{c}{\textbf{\texttt{mmdmin} (Distance $=\mathrm{MMD}$)}} \\
\midrule
 & 1 & 50  & 0.1772 & 27.77 & 39.73 & 3.35 & 11.86 \\
 & 2 & 100 & 0.1287 & 39.09 & 51.09 & 3.69 & 13.85 \\
 & 3 & 150 & 0.1062 & 54.95 & 65.85 & 2.21 & 29.80 \\
 & 4 & 200 & 0.0994 & 53.49 & 65.45 & 4.40 & 14.88 \\
 & 5 & 250 & 0.0984 & 66.56 & 79.89 & 4.48 & 17.83 \\
\bottomrule
\end{tabular}
\end{table}

\subsection{Case Study: Deployment Gate on {\normalfont\textsc{DomainNet}} (real \textrightarrow\ sketch)}
\label{app:domainnet-gate}
\textbf{Setup.} Evaluate $20$ fine-tuned candidates (\texttt{sketch}) against $Q$ (\texttt{real}) on the anchor $P{=}\texttt{real}$. Baseline: \texttt{-MSP} (negated maximum softmax probability). \textcolor{black}{We utilize the same \textbf{high-dimensional (2048-d)} feature space as the active selection task to ensure a realistic evaluation of high-stakes visual shifts.}

\textbf{Metrics.} (i) Ranking: Spearman’s $\rho$ vs.\ $\riskdiff$.
(ii) Gating: positive$=$harmful ($\Delta R{>}\tau$); AUROC/AUPRC across thresholds.

\textbf{Results.} TRACE correlates strongly with harm ($\rho\!\approx\!0.944$), outperforming \texttt{-MSP} ($\rho\!\approx\!-0.874$). In this fine-tuning setting, both $\Woneh$ and single-bandwidth MMD are nearly constant across candidates, making the gate effectively $\OutDiscLtwo$. A threshold sweep shows robust discrimination; at $\tau{=}0.13$ (55\% harmful), TRACE attains AUROC/AUPRC $=1.000$.

\begin{table}[h]
\caption{Threshold sweep for the deployment gate (positive = harmful). TRACE remains robust across operating points.}
\label{tab:gate-tau-app}
\begin{center}
\small
\setlength{\tabcolsep}{6pt}
\begin{tabular}{r c c c c}
\toprule
\multicolumn{1}{c}{\bf $\tau$} & \bf AUROC$_{\text{TRACE}}$ & \bf AUPRC$_{\text{TRACE}}$ & \bf AUROC$_{\text{-MSP}}$ & \bf AUPRC$_{\text{-MSP}}$ \\ 
\midrule
0.10 & 1.000 & 1.000 & 1.000 & 1.000 \\
0.13 & \textbf{1.000} & \textbf{1.000} & 0.980 & 0.986 \\
0.23 & 0.984 & 0.950 & 0.984 & 0.950 \\
\bottomrule
\end{tabular}
\end{center}
\end{table}

\subsection{Reference Policy}
\label{app:reference-policy}
Across all sections and tables:
\begin{itemize}[leftmargin=6mm,itemsep=1mm]
\item “TRACE-Proxy” means Eq.~\eqref{eq:trace-proxy-app} with $\chat$ calibrated by Eq.~\eqref{eq:chat-cal-app}.
\item “TRACE” or “the diagnostic” means the full $\widehat{\mathcal B}$.
\item Gate scores are TRACE-W/TRACE-MMD as in Eqs.~\eqref{eq:trace-w-app}–\eqref{eq:trace-mmd-app}.
\end{itemize}

\end{document}